\Crefname{appendix}{Appendix}{Appendices}
\crefname{assumption}{assump.}{assumptions}
\Crefname{theorem}{Th.}{Th.}
\Crefname{definition}{Def.}{Def.}
\Crefname{lemma}{Lem.}{Lem.}
\Crefname{equation}{Eq.}{Eq.}
\Crefname{section}{Sec.}{Sec.}
\Crefname{figure}{Fig.}{Fig.}
\Crefname{algorithm}{Alg.}{Alg.}
\Crefname{remark}{Rem.}{Rem.}
\Crefname{corollary}{Corr.}{Corr.}
\Crefname{table}{Tab.}{Tab.}
\newcommand{\iid}{\ac{IID}\xspace}
\newcommand{\niid}{non-IID\xspace}
\newcommand{\cifar}{CIFAR-10\xspace}
\newcommand{\topk}{\textsc{TopK}\xspace}
\newcommand{\dsgd}{\ac{D-PSGD}\xspace}
\newcommand{\movielens}{MovieLens\xspace}
\newcommand{\celeba}{CelebA\xspace}
\newcommand{\qasc}{QASC\xspace}
\newcommand{\xsum}{XSum\xspace}
\newcommand{\tinytransformer}{T5-Efficient-TINY\xspace}
\newcommand{\rougel}{ROUGE-L\xspace}
\newcommand{\sys}{\ac{CESAR}\xspace}
\newcommand{\codeurl}{\url{https://github.com/sacs-epfl/cesar}} %
\newcolumntype{L}[1]{>{\raggedright\let\newline\\\arraybackslash\hspace{0pt}}m{#1}}
\newcolumntype{C}[1]{>{\centering\let\newline\\\arraybackslash\hspace{0pt}}m{#1}}
\newcolumntype{R}[1]{>{\raggedleft\let\newline\\\arraybackslash\hspace{0pt}}m{#1}}
\pgfplotsset{compat=newest}
\tikzset{external/mode=list and make}
\def\homepath{}
\edef\x{\endgroup\def\noexpand\homepath{%
		\@@input|"kpsewhich --var-value=HOME" }}\x
\def\overleafhome{/tmp}
\newcommand{\inputplot}[2]{%
	\ifx\homepath\overleafhome%
	\IfBeginWith{#1}{plots}{\includegraphics{main-figure#2.pdf}}{#1}%
	\else%
	{\sffamily\scriptsize\input{#1}}
	\fi
}
\newcommand{\newgroupwidth}[2]%
{\expandafter\xdef\csname groupwidth#1\endcsname{#2}}
\newcounter{groupwidth}
\newsavebox{\groupwidthbox}
\edef\groupnumber{#1}%
\let\expandafter\mywidth\csname groupwidth\thegroupwidth\endcsname}%
		\tikzset{/pgfplots/width={\mywidth}}%
	\pgfmathsetlengthmacro{\mywidth}{\mywidth + (\linewidth - \wd\groupwidthbox)/\groupnumber}
\write\@auxout{\string\newgroupwidth{\thegroupwidth}{\mywidth}}}
\acrodef{ML}{machine learning}
\acrodef{RMW}{random model walk}
\acrodef{D-PSGD}{decentralized parallel stochastic gradient descent}
\acrodef{FL}{federated learning}
\acrodef{SGD}{stochastic gradient descent}
\acrodef{IID}{independent and identically distributed}
\acrodef{non-IID}{non independent and identically distributed}
\acrodef{GDPR}{general data protection regulation}
\acrodef{HIPAA}{health insurance portability and accountability act}
\acrodef{CESAR}{communication-efficient secure aggregation}
\acrodef{NLP}{natural language processing}
\acrodef{HbC}{honest-but-curious}
\acrodef{CML}{collaborative machine learning}
\acrodef{P3}{public pool of prompts}
\acrodef{ROC-AUC}{area under the ROC curve}
\acrodef{DL}{decentralized learning}
\acrodef{MIA}{membership inference attack}
\newcommand{\mynote}[3]{
		\fbox{\bfseries\sffamily\scriptsize#1}
		{\small$\blacktriangleright$\textsf{\emph{\color{#3}{#2}}}$\blacktriangleleft$}}
	\newcommand{\zzz}[1]{{\setlength{\fboxsep}{2pt}\fcolorbox{black}{yellow}{\textsf{\emph{#1}}}}\xspace}}
\newcommand{\mynote}[3]{}
	\newcommand{\zzz}[1]{}}
\newcommand{\milos}[1]{\mynote{Milos}{#1}{purple}}
\newcommand\C[1]\null %
\DeclareMathOperator*{\argmin}{arg\,min}
\newcommand{\leftarrowvert}{\mathrel{\leftarrow\!\!\vert}}
\DeclareMathOperator{\opView}{View}
\DeclareMathOperator{\opViewTwo}{View_2}
\DeclareMathOperator{\opComm}{Comm} %
	\def\ltx@label#1{\cref@label{#1}}%
	\def\label@in@display@noarg#1{\cref@old@label@in@display{#1}}%
	\def\label@in@mmeasure@noarg#1{%
		\begingroup
		\measuring@false
		\cref@old@label@in@display{#1}%
		\endgroup
	}%
\def\BibTeX{{\rm B\kern-.05em{\sc i\kern-.025em b}\kern-.08em
    T\kern-.1667em\lower.7ex\hbox{E}\kern-.125emX}}
\newcommand{\cmark}{\textbf{\textcolor{green!60!black}{\ding{51}}}}
\newcommand{\xmark}{\textbf{\textcolor{red!80!black}{\ding{55}}}}
\begin{document}

\title{Communication-Efficient Secure Aggregation \\in Decentralized Learning}

\newcommand{\epflblock}{
\textit{EPFL}\\
Lausanne, Switzerland
}

\author{
\IEEEauthorblockN{Sayan Biswas, Anne-Marie Kermarrec, Rafael Pires, Rishi Sharma, and Milos Vujasinovic}
\IEEEauthorblockA{\textit{EPFL}\\
Lausanne, Switzerland\\
\{sayan.biswas, anne-marie.kermarrec, rafael.pires,
rishi.sharma, milos.vujasinovic\}@epfl.ch}
}

\maketitle

\begin{abstract}%

\leavevmode \Ac{DL} enables participants to collaboratively train models without a central server, yet it faces significant scalability challenges that demand sparsification to reduce the prohibitive communication costs of peer-to-peer exchange.
While secure aggregation effectively mitigates privacy risks in standard settings, it has remained fundamentally incompatible with sparsification in decentralized networks due to the mismatch of indices across local updates, forcing a trade-off between communication efficiency and privacy.
This paper introduces CESAR, a novel protocol that resolves this incompatibility by integrating secure aggregation and sparsification to provide provable defense against honest-but-curious and colluding adversaries.
By coordinating masks over parameter intersections, CESAR supports node dropouts and robust privacy without central aggregation.
Empirical evaluations on models up to \num{124} million parameters demonstrate that CESAR matches the accuracy of non-private baselines under equal sparsification while cutting total data exchange by \SI{66.7}{\percent} compared to a standard full-parameter decentralized protocol (D-PSGD).
With TopK sparsification on IID data, CESAR even exceeds by \SI{0.3}{\percent} the accuracy achieved by D-PSGD with sparsification.
Collectively, these results establish CESAR as the first decentralized protocol to achieve both privacy and communication efficiency through secure aggregation in \ac{DL}.

\end{abstract} 
\begin{IEEEkeywords}
decentralized machine learning, sparsification, secure aggregation.
\end{IEEEkeywords}

\begingroup
\renewcommand\thefootnote{}%
\makeatletter\renewcommand\@makefntext[1]{\noindent #1}\makeatother
\footnotetext{%
This work has been funded by the Swiss National Science Foundation, under the project ``FRIDAY: Frugal, Privacy-Aware and Practical Decentralized Learning'', SNSF proposal No. 10.001.796.
The authors thank Sami Abuzakuk, Akash Dhasade, and Martijn de Vos for constructive discussions, and the anonymous reviewers of SRDS 2026 for their valuable feedback and time. %
}
\endgroup

\acresetall
\section{Introduction}

The performance of modern \ac{ML} models relies heavily on large, high-quality datasets~\cite{ramezan_effects_2021}, creating a strong incentive for collaborative learning to pool diverse data from multiple sources~\cite{sheller2018multiinstitutionaldeeplearningmodeling}. 
However, direct data sharing poses significant privacy and security challenges, particularly in sensitive domains governed by regulations like \ac{GDPR} and \ac{HIPAA}. 
\Ac{DL}~\cite{firstdlpaper} addresses these concerns by keeping data distributed across the network, allowing learning to proceed through the exchange of local model parameters rather than raw data. 
Although similar approaches have been deployed in centralized settings for applications like text prediction~\cite{keyboardImprovementFL} and healthcare~\cite{radiationoncologyFL}, \ac{DL} remains essential when a coordinating entity is infeasible or undesirable. 
By eliminating the central server to avoid a single point of failure and authority, \ac{DL} empowers local control and enables distributed decision-making. 
Yet, despite its growing popularity, this architecture introduces two pressing challenges: communication efficiency and privacy.

\textbf{Communication Efficiency.} 
Standard \ac{DL} requires nodes to exchange high-dimensional model parameters with all neighbors every round~\cite{lian2017dpsgd}. 
As models grow in size, this exchange becomes prohibitively expensive.
Sparsification techniques mitigate this cost by transmitting only a selected subset of parameters~\cite{alistarh2018sparseconvergence,tang2020sparsification}, dramatically reducing communication while largely preserving model accuracy~\cite{lin2020deepgradientcompressionreducing}.

\textbf{Privacy.} 
Even without sharing raw data, exchanged model updates can leak sensitive information about local datasets~\cite{shokriMembershipInferenceAttacks2017,carlini2022membership}, preventing standalone \ac{DL} deployment in privacy-critical domains. 
Existing privacy-preserving approaches for \ac{DL} focus primarily on noise-based~\cite{cyffers2022muffliato,biswas2025low} and randomized model sharing techniques~\cite{biswas2025noiseless}, which degrade utility and slow convergence. 
By contrast, secure aggregation~\cite{bonawitz2017practical}, widely adopted in settings with a centralized aggregator, offers a noise-free alternative: each participant masks its update with coordinated random values that cancel upon summation, revealing only the aggregate. 
In \ac{ML}, secure aggregation primarily targets single, large-scale aggregations involving many participants. 
It remains underexplored in fully decentralized networks where training proceeds through numerous smaller aggregations among few peers each round, and where no central aggregator exists. 
Adapting secure aggregation to this setting introduces distinct constraints that make protocol design challenging. \Cref{tab:approach_comparison} summarizes various approaches for privacy-preserving and communication-efficient \ac{DL}.

\begin{table}[!ht]
	\centering
	\begin{tabular}{p{4.1cm}|ccc}
		\toprule
		\textbf{Approach} & \textbf{Comm.\ eff.} & \textbf{Privacy}\\%
		\midrule
		Standard DL~\cite{lian2017dpsgd} & \xmark & \xmark\\%
		Standard DL + sparsification & \cmark & \xmark\\ %
		Noise-based~\cite{cyffers2022muffliato,biswas2025low} & \xmark & \cmark\\ %
		Secure aggregation & \xmark & \cmark\\ %
		\textbf{CESAR (ours)} & \cmark & \cmark\\ %
		\bottomrule
	\end{tabular}
	\caption{Comparison of training approaches in \ac{DL}.}
	\label{tab:approach_comparison}
\end{table}

While sparsification and secure aggregation individually address communication and privacy, combining them in \ac{DL} introduces a critical conflict.
Secure aggregation requires each shared parameter to be masked and transmitted by multiple nodes.
Yet, sparsification techniques such as \topk~\cite{alistarh2018sparseconvergence} and random subsampling~\cite{tang2020sparsification} select different parameter sets across participants.
Consequently, a parameter shared by only one participant creates a dilemma: it must be shared unmasked, aggregated without mask cancellation, or dropped.
Each option compromises either privacy or convergence, making this interplay the key obstacle to achieving both communication efficiency and strong privacy in \ac{DL}.

This paper addresses this gap by introducing \acs{CESAR} (\emph{\aclu{CESAR}}), a secure aggregation protocol for fully decentralized \ac{ML}.
\acs{CESAR} adapts secure aggregation techniques from centralized frameworks for use with \ac{D-PSGD}, the canonical non-private algorithm for \ac{DL}, while natively supporting sparsification.
Prior work on secure aggregation with sparsification targets \ac{FL}: SparseSecAgg~\cite{ergun2021sparsified} integrates sparsification directly into the aggregation protocol but limits compatibility with other sparsification techniques, while Lu~\etal~\cite{lu2023top} incorporate \topk by masking the union of indices selected across nodes.
Both approaches rely on a central server and cannot be trivially adapted to \ac{DL}.
By contrast, \sys operates in a fully serverless setting, coordinates masks over two-hop neighborhoods, and resolves index mismatches by masking only the intersection of locally selected indices.
The protocol defends against \ac{HbC} adversaries, resists collusion through a configurable masking requirement, and gracefully handles node dropouts with minimal communication overhead and minor accuracy impact.
Experiments on models up to \num{124}M parameters demonstrate that \sys matches \ac{D-PSGD} accuracy while reducing data exchange by \SI{66.7}{\percent} compared to full-parameter sharing, and with \topk on IID data, exceeds \ac{D-PSGD} accuracy by 0.3\%.

\textbf{Contributions.} 
We make the following contributions:
\begin{itemize}
	\item We introduce \sys (\Cref{sec:cesar_explained}), the first secure aggregation protocol for \ac{DL} that integrates sparsification and dropout handling without requiring additional servers.
	\item We formally prove security against \ac{HbC} adversaries and provide theoretical analysis enabling provable resilience against bounded collusion (\Cref{sec:theorethical_analysis}).
	\item We derive the expected fraction of shared parameters under colluding adversaries (\Cref{sec:theorethical_analysis}), enabling principled parameterization of \sys.
	\item We evaluate \sys across five datasets and models up to 124M parameters (\Cref{sec:eval_different_datasets}), demonstrating accuracy matching \dsgd with 7--35\% communication overhead. For fixed communication budgets, \sys achieves higher accuracy than both full-parameter secure aggregation and unsparsified \dsgd (\Cref{sec:full_secagg_comparison}).
	\item We show that \sys with dropout support loses only 0.36\% accuracy at 10\% dropout rate while reducing total data exchange (\Cref{sec:exp:dropout_handling}).
\end{itemize}

\section{Preliminaries}
\label{sec:preliminaries}

\subsection{Decentralized Learning}

In \ac{DL} nodes $\mathcal{N}$ are organized in a network topology $G$. Each node $N_i \in \mathcal{N}$ holds a local dataset $D_i$ with distribution $\varphi_i$. These datasets remain private throughout training. If local label distributions are homogeneous across nodes, datasets are termed \iid, and \niid~otherwise. The collective objective is to find optimal parameters $\theta^*$ minimizing a loss function $\mathcal{L}$ across the aggregated dataset, \ie $\theta^* = \argmin_\theta \mathcal{L}(D; \theta)$ s.t. $D = \bigcup_{N_j \in \mathcal{N}} D_j$.

Training proceeds over multiple rounds, each comprising three steps: training, sharing, and aggregation. First, each node performs one or more iterations of \ac{SGD} on mini-batches of local data. Next, nodes share updated parameters with all neighbors, as in \ac{D-PSGD}~\cite{lian2017dpsgd}.
Finally, each node averages received parameters with its own model, producing the starting point for the next round. This process iterates until convergence.

\subsection{Sparsification}

As models grow increasingly large, transferring them over a network incurs significant costs. Sparsification addresses this through lossy compression, transmitting only a selected subset of parameter indices rather than the entire model. %

Two popular approaches are random subsampling~\cite{tang2020sparsification} and \topk~\cite{alistarh2018sparseconvergence}. Both take as input a selection fraction $\alpha$ and a model of size $d$. Random subsampling selects each index independently with probability $\alpha$, expecting $\alpha d$ indices. \topk sorts parameters by magnitude and selects the top $\alpha d$ values.

\subsection{Secure Aggregation}
\label{sec:secagg}

Secure aggregation~\cite{bonawitz2017practical} is a multiparty computation technique allowing mutually distrustful parties to jointly compute a linear function of their private inputs, canonically the sum. Given $n \ge 3$ nodes $\mathcal{N} = \{N_1, \ldots, N_n\}$, each node $N_i$ holds a private value $v_i$ and seeks to compute $f(v_1, \ldots, v_n) = \sum_{i=1}^n v_i =: \hat{v}$ such that individual values remain private while the aggregate $\hat{v}$ becomes public.

Pairwise masking~\cite{acs2011dream, bonawitz2017practical} achieves this by imposing an ordering $N_1 < \ldots < N_n$. Each pair $N_i, N_j$ (s.t. $N_i < N_j$) generates a private random mask $M_{i,j}$. Node $N_i$ adds $M_{i,j}$ to its value while $N_j$ subtracts it, ensuring masks sum to zero across all pairs. Each node $N_i$ computes a masked value $v^{'}_i = v_i - \sum_{j=1}^{i-1} M_{j,i} + \sum_{j=i+1}^n M_{i,j}$. This $v^{'}_i$ can be shared publicly, revealing no information about the original $v_i$. 
\section{\sys: Secure aggregation for sparsified decentralized learning}
\label{sec:cesar}

\subsection{System model} 
\label{sec:system_model}

We consider a synchronous network of $n$ nodes $\mathcal{N} = \{N_1, \dots, N_n\}$, interconnected in a communication graph with bidirectional, reliable, and encrypted channels (\eg, TLS over TCP).
No central server exists: all nodes are equal in hierarchy.
A peer sampling service~\cite{jelasity2007gossip,guerraoui2024peerswap} generates an \emph{aggregation graph} $G(\mathcal{N}, E)$ as an overlay network of the communication graph; all subsequent references to \emph{graph} denote this overlay.

\sys requires each node to communicate with immediate and second-degree neighbors.
We denote immediate neighbors of $N \in \mathcal{N}$ as $\operatorname{View}(N)=\{M \in \mathcal{N} : \{N, M\} \in E\}$ and second-degree neighbors as $\operatorname{View}_2(N)=(\cup_{M\in\operatorname{View}(N)} \operatorname{View}(M)) \setminus \{N\}$.
For $N_{r},N_i\in\mathcal{N}$ with $\{N_{r},N_i\}\in E$, let $\operatorname{Comm}(N_{r}, N_i)=\operatorname{View}(N_{r})\setminus \{N_{i}\}$ denote nodes that $N_i$ coordinates with to determine index intersections and agree on masks before sharing masked parameters to \emph{receiving node} $N_{r}$.

All nodes train models with shared architecture of $d$ parameters enumerated by indices $\mathcal{I} = \{1, \ldots, d\}$.
Local parameters of $N_i$ are $v_i \in \mathbb{R}^d$, where $v_i^{(p)}$ denotes the value at index $p \in \mathcal{I}$.
The indices selected by $N_i$ for sharing are $I_{i} \subseteq \mathcal{I}$.%

\textbf{Pseudorandom Mask Generation.} Letting $\Sigma$ denote the space of all seeds, we assume all nodes have access to a pseudorandom number generator $G_\sigma: \mathbb{N} \rightarrow \mathbb{Z}_q$. For a given index $i$, the function $G_\sigma(i)$ returns the $i$-th element of the pseudorandom sequence determined by $\sigma \in \Sigma$. With $\mathcal{P}(\mathcal{X})$ representing the power set of any set $\mathcal{X}$, we define a \emph{random mask function} $\mathsf{RM}_d: \Sigma \times \Sigma \times \mathcal{P}([d]) \rightarrow \mathbb{Z}_q^d$. This function generates a $d$-dimensional vector where coordinates \emph{not} in $I \subseteq [d]$ are zero. We require this function to satisfy the antisymmetry requirement, $\mathsf{RM}_d(\sigma_i, \sigma_j, I) = -\mathsf{RM}_d(\sigma_j, \sigma_i, I)$, ensuring that pairwise masks cancel upon summation. For the $k$-th component of the mask, we define:
\[
[\mathsf{RM}_d(\sigma_i, \sigma_j, I)]_k = 
\begin{cases}
  G_{\sigma_i}(k) - G_{\sigma_j}(k), & \text{if } k \in I, \\
  0, & \text{otherwise}.
\end{cases}
\]

\subsection{Threat model}\label{sec:threat}
\sys targets permissioned networks with verifiable identities, such as cross-silo collaborative learning.
We focus on adversaries who adhere to the protocol, active manipulations are out of scope.

Our primary threat model assumes passive \acf{HbC} adversaries seeking to extract raw parameters from victim nodes. %
Any node may act as an \ac{HbC} adversary using all available information to infer other nodes' models. This threat model is consistent with the standard \ac{DL} literature~\cite{cyffers2022muffliato,biswas2025noiseless,biswas2025low}. We consider \sys in settings both with and without collusion among the adversaries.
For the former, we assume the existence of a single colluding set of size $\geq 2$ where all retrieved private information is shared among members.

Privacy concerns in this work pertain only to local models.
Because secure aggregation does not modify the aggregated model, protecting it falls outside the scope of secure aggregation: noise-based solutions (orthogonal to this work) may address this at reduced utility.
We also consider leakage from sparsification metadata out of scope, as such metadata is method-specific.
However, sparsification provides privacy benefits over full model sharing (demonstrated in \Cref{sec:exp:partial_leakage_eval}), which we argue outweigh potential metadata leakage.

\subsection{\sys algorithm}
\label{sec:cesar_explained}

\noindent\textbf{Intuition and Key Challenges.}
Standard secure aggregation relies on pairwise masks that cancel upon summation: if node $N_a$ holding $v_a$ adds a mask $M$ and node $N_b$ holding $v_b$ subtracts $M$, then any party that receives and sums \emph{both} contributions observes only $v_a + v_b$.
In centralized settings, this condition is trivially satisfied because a single server receives all masked updates and performs the aggregation.

In \ac{DL}, however, aggregation is local.
Each node $N$ aggregates updates only from its neighbors $\opView(N)$, and different nodes generally have different neighborhoods.
If $N_a$ masks its update with $M_{a,b}$ (agreed with $N_b$), the corresponding $-M_{a,b}$ must be included in the \emph{same aggregation} for cancellation to occur.
That is, every node receiving a masked update from $N_a$ must also receive the counter-masked update from $N_b$, a condition not guaranteed in general decentralized graphs.

\sys addresses this by generating \emph{recipient-specific} masks.
When node $N_i$ sends an update to a neighbor $N_r$, it coordinates with all other neighbors of $N_r$, $\opComm(N_r,N_i)$, to establish pairwise masks.
As a result, all nodes contributing to $N_r$ apply masks that cancel when $N_r$ performs aggregation, regardless of differences in their local neighborhoods.

Sparsification introduces a second challenge:
independent selection causes nodes $N_i$ and $N_j$ to choose differing index sets $I_i$ and $I_j$.
For indices in the symmetric difference $I_i \triangle I_j$, nodes cannot form cancelling mask pairs, resulting in unmasked transmissions or incomplete cancellation.
\sys addresses this by masking only the shared indices $I_i \cap I_j$ and discarding parameters without sufficient masks, ensuring both cancellation and privacy.

\begin{figure*}[ht]
	\centering
	\includegraphics[width=0.75\textwidth]{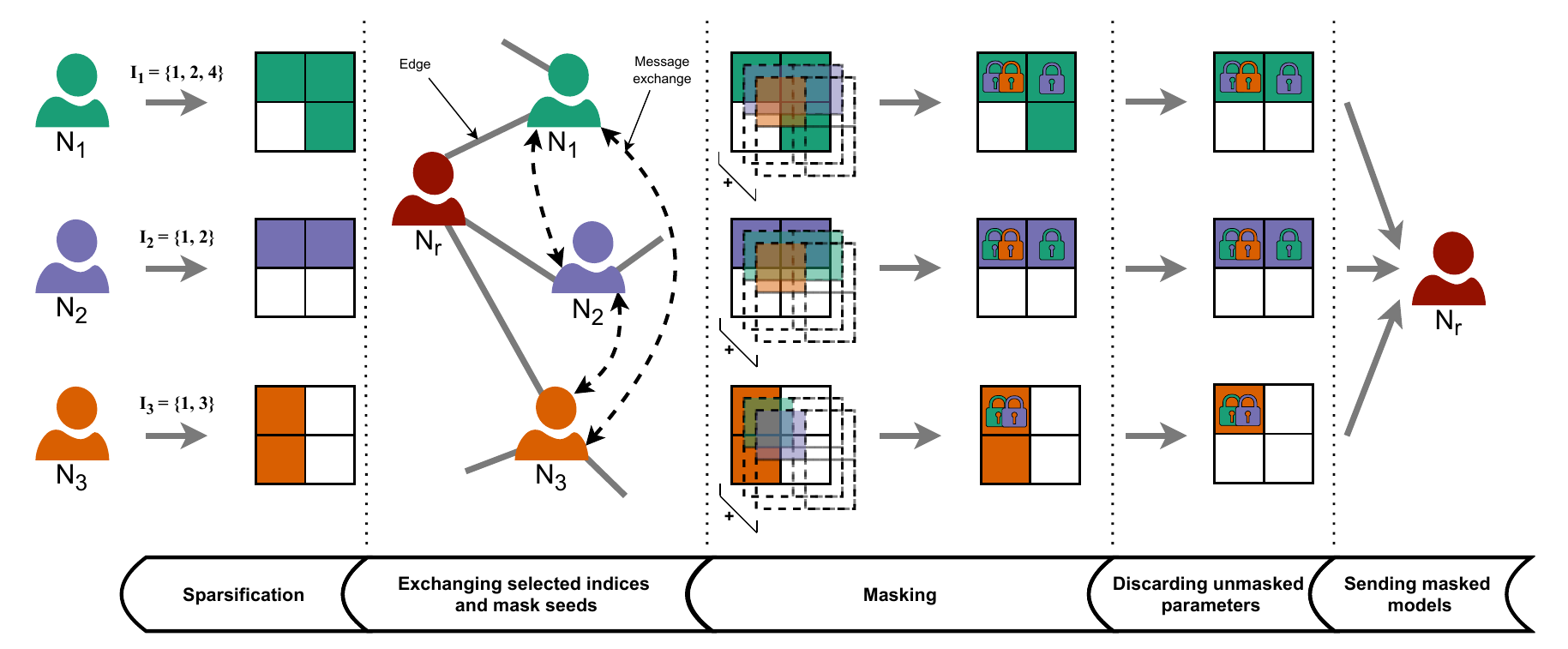}
	\caption{Overview of \sys with nodes $N_1$, $N_2$, and $N_3$ sending to common neighbor $N_r$ for $s=1$. 
        (1) Each node independently sparsifies its local model; 
        (2) Nodes exchange selected indices and partial seeds with mask partners; 
        (3) Each node masks indices in the pairwise intersections; 
        (4) Indices with insufficient masks are discarded; 
        (5) Masked sparse models are sent to $N_r$.}
	\label{fig:cesar_overview}
\end{figure*}

\smallskip
\noindent\textbf{Protocol Walkthrough.}
\label{sec:worked_example}
\Cref{fig:cesar_overview} illustrates the execution of \sys, where nodes $N_1$, $N_2$, and $N_3$ prepare masked updates for their common neighbor $N_r$. We narrate from the perspective of $N_1$, though the protocol runs in parallel on all nodes. Each node repeats this process independently for every neighbor, but sparsification is performed only once per round. 
The masking requirement $s$ is a tunable hyperparameter defining the minimum number of pairwise masks a parameter must carry to be shared. We require $s$ to be consistent across the system (formalized in \Cref{corr:samemaskingreq}). Here we set $s=1$.

\textbf{Step 1: Sparsification.}
Each node independently selects indices: $N_1$ selects $I_1 = \{1, 2, 4\}$, $N_2$ selects $I_2 = \{1, 2\}$, and $N_3$ selects $I_3 = \{1, 3\}$.

\textbf{Step 2: Exchanging Indices and Seeds.}
To send to $N_r$, node $N_1$ coordinates with its mask partners $\opComm(N_r, N_1) = \{N_2, N_3\}$. With each partner $N_j$, node $N_1$ exchanges a partial seed $\sigma_{1j}$ and its index set $I_1$, receiving $\sigma_{j1}$ and $I_j$ in return. These index sets determine where masks can be applied: only indices in $I_1 \cap I_j$ can carry a mask agreed with $N_j$.

\textbf{Step 3: Masking.}
Using the exchanged seeds, $N_1$ computes masks via $\text{RM}_d$:
$M_{1,2} = \text{RM}_d(\sigma_{12}, \sigma_{21}, I_1 \cap I_2)$ covering $\{1, 2\}$, and 
$M_{1,3} = \text{RM}_d(\sigma_{13}, \sigma_{31}, I_1 \cap I_3)$ covering $\{1\}$.
By antisymmetry of $\text{RM}_d$, partners $N_2$ and $N_3$ independently compute $M_{2,1} = -M_{1,2}$ and $M_{3,1} = -M_{1,3}$ from the same seeds. Node $N_1$ counts masks per index: index 1 receives two, index 2 receives one, index 4 receives none.

\textbf{Step 4: Discarding Unmasked Parameters.}
Any index with fewer than $s$ masks is discarded. With $s=1$, index 4 fails this threshold and is dropped from $I_1$.

\textbf{Step 5: Sending Masked Model.}
Let $I_{1 \to r} = \{1,2\}$ be the surviving indices. $N_1$ computes masked values $\tilde{v}_{1\to r}^{(p)} = v_1^{(p)} + M_{1,2}^{(p)} + M_{1,3}^{(p)}$ for each $p \in I_{1\to r}$, then transmits $(I_{1 \to r}, \tilde{v}_{1\to r})$ to $N_r$. Upon receiving masked models from all neighbors, $N_r$ aggregates them. Masks cancel since $M_{1,2} + M_{2,1} = 0$ and $M_{1,3} + M_{3,1} = 0$, recovering the true sum of local parameters.

\smallskip
\noindent\textbf{Algorithm Specification.}
\Cref{alg:basicSecAgg} formalizes \sys.
The notation $v[I]$ extracts the subvector of $v$ at indices $I$.

\begin{algorithm2e}[t!]
    \DontPrintSemicolon
    \caption{CESAR on node $N_i$}
    \label{alg:basicSecAgg}
    \small
    \KwIn{Model $v_i \in \mathbb{R}^d$, selected indices $I_i$, masking requirement $s$}
    \KwOut{Aggregated model $v'_i$}
    
    \BlankLine
    \For{$N_j \in \opViewTwo(N_i)$}{
        Send $(\sigma_{ij}, I_i)$ to $N_j$; receive $(\sigma_{ji}, I_j)$\;
        $M_{i,j} \gets \mathrm{RM}_d(\sigma_{ij}, \sigma_{ji}, I_i \cap I_j)$\;
    }
    \For{\textcolor{blue}{$N_j \in \opView(N_i) \setminus \opViewTwo(N_i)$}}{
        \textcolor{blue}{Send $I_i$ to $N_j$, receive $I_j$}\;
    }
    
    \BlankLine
    \For{$N_r \in \opView(N_i)$}{
        $\mathcal{P} \gets \opComm(N_r, N_i)$ \textcolor{blue}{$\setminus$ crashed}\;
        $I_{i \to r} \gets \emptyset$\;
        \For{$p \in I_i$}{
            $c_p \gets$ number of nodes in $\mathcal{P}$ that selected $p$\;
            \lIf{$c_p \geq s$}{add $p$ to $I_{i \to r}$}
        }
        $\tilde{v}_{i \to r} \gets (v_i + \sum_{N_j \in \mathcal{P}} M_{i,j})[I_{i \to r}]$\;
        Send $(I_{i \to r}, \tilde{v}_{i \to r})$ to $N_r$; receive $(I_{r \to i}, \tilde{v}_{r \to i})$\;
    }
    
    \BlankLine
    \For{$p \in [d]$}{
        \eIf{\textcolor{blue}{a crashed neighbor had selected $p$}}{
            \textcolor{blue}{$v'^{(p)}_i \gets v^{(p)}_i$}\;
        }{
            $v'^{(p)}_i \gets$ mean of $v^{(p)}_i$ and received values at $p$\;
        }
    }
    \Return $v'_i$
\end{algorithm2e}

\smallskip
\noindent\textbf{Handling Dropouts.}
\label{sec:handling_dropouts}
In real-world deployments, nodes may disconnect, making robust dropout handling essential.
A straightforward approach is as follows: if a receiving node $N_r$ detects that a neighbor has crashed before delivering its masked model, $N_r$ aborts the current round and reuses its local model for the next round.

A more refined strategy is possible. 
Rather than aborting entirely, $N_r$ can identify and discard only those parameters that the failed node would have contributed, integrating partial contributions from remaining neighbors and preserving most of the aggregation round.

We refer to this method as \emph{parameter-wise dropout handling}, integrated into \sys as highlighted in \textcolor{blue}{blue} in \Cref{alg:basicSecAgg}. 
To enable this, each node must share its selected indices not only with second-degree neighbors but also with direct neighbors during mask coordination.
Detection itself relies on timeouts: under the synchronous model of \Cref{sec:system_model}, these realize a perfect failure detector~\cite{chandra1996unreliable}.
                                                                
\subsection{Theoretical analysis}
\label{sec:theorethical_analysis}

\ifthenelse{\boolean{isextendedversion}}{}{
    In the interest of space, all proofs are deferred to the extended version of this paper \cite{cesar-extended}.
}

\subsubsection{Privacy guarantees}

In this section we study the privacy guarantees provided by \sys. 
We strictly consider the privacy of parameter values of local models.
Consequently, any information leakage from the final aggregated model or leakage caused by sparsification is beyond the scope of this theoretical analysis.
Empirical leakage of aggregated models and sparsification is evaluated in \Cref{sec:exp:partial_leakage_eval}.

We consider a parameter private if its exact value cannot be inferred by any other node in the network.
A node is considered private if all its parameters are private. In the remainder of this section, we derive a lower bound on the worst-case privacy of \sys, considering secure aggregation and sparsification. 

\begin{restatable}{theorem}{hbcresilient}\label{th:hbc_resilient}
\sys ensures that no \ac{HbC} adversary can infer the exact value of any parameter received from its neighbors.
\end{restatable}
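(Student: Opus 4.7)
The plan is to trace, for an arbitrary victim node $N_i$, every piece of information that any honest-but-curious adversary $A$ observes during the protocol and argue that none of it pins down an exact value $v_i^{(p)}$. Since all channels are secure and encrypted and $N_i$ only transmits (masked) parameter values to nodes in $\operatorname{View}(N_i)$, I would split into three positional cases: $A = N_i$ (vacuous), $A \notin \operatorname{View}(N_i) \cup \{N_i\}$, and $A \in \operatorname{View}(N_i)$.

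For the middle case, the only data about $N_i$ that ever reaches $A$ is the index set $I_i$ and a partial seed, and only if $A \in \operatorname{View}_2(N_i)$ (exchanged during the prestep of \Cref{alg:cesarPrestep}). Neither of these carries any numerical parameter value, so $A$ can derive nothing about $v_i^{(p)}$. The substantive case is $A \in \operatorname{View}(N_i)$, where $A$ receives $(I'_{iA}, w_{A \leftarrowvert i})$. The discard condition at line~\ref{algline:discard_indice_condition} of \Cref{alg:cesarMaskAndSend} guarantees that for every $p \in I'_{iA}$ there is at least one $N_j \in \operatorname{View}(A) \setminus \{N_i\}$ with $p \in I_{i \cap j}$, so the transmitted entry has the form
$$w_{A \leftarrowvert i}[p] \;=\; v_i^{(p)} + \sum_{j} \bigl(M_{ij}[p] - M_{ji}[p]\bigr),$$
with at least one summand present. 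The seeds $seed_{ij}$ and $seed_{ji}$ producing these masks are exchanged only between $N_i$ and $N_j$ over secure channels and never reach $A$ (since $j \ne A$); under the pseudorandomness of the mask generation, $M'_{ij}[p]$ is then indistinguishable from a fresh uniform value from $A$'s viewpoint and statistically independent of $v_i^{(p)}$. Hence $w_{A \leftarrowvert i}[p]$ behaves as a one-time-pad-style encryption of $v_i^{(p)}$ and conveys no information about its exact value.

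The main obstacle I anticipate is handling the aggregation step: although each individual received message is masked, $A$ eventually averages all incoming masked models and the pairwise masks cancel by construction. I would close the argument by observing that this cancellation exposes only $\sum_{l \in \operatorname{View}(A) \cup \{A\}} v_l^{(p)}$ restricted to the surviving indices, not any single $v_i^{(p)}$. The algorithm description already notes that a recipient of degree one has no viable mask partner and that the corresponding edge is effectively redundant; under the working assumption that every receiving node has at least two distinct contributors at index $p$, the aggregate always blends $v_i^{(p)}$ with another private input, and thus no honest-but-curious adversary can isolate any exact parameter value of $N_i$.
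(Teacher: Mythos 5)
Your proof is correct and follows essentially the same route as the paper's: each parameter that survives the discard condition carries at least one pairwise mask whose seeds never reach the receiver, so individually received values are uniformly masked, and the masks only cancel in the aggregate, which exposes nothing beyond the neighborhood sum. Your explicit case analysis for adversaries outside $\operatorname{View}(N_i)$ (who see only index sets and seeds in the prestep) is a small addition the paper omits, and your closing "working assumption" that a second contributor for index $p$ actually transmits is in fact guaranteed by the protocol (the discard condition together with \Cref{lem:numberofmasks}), so no extra hypothesis is needed.
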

\ifthenelse{\boolean{isextendedversion}}{
    \begin{proof}
       Postponed to \Cref{app:proof:hbc_resilient}.
    \end{proof}
}{}

\subsubsection{Collusion}
\label{sec:collusion_theory}

\ac{DL} with colluding adversaries poses significant challenges. 
Here, we demonstrate how \sys can be adapted to resist such collusion.
In \Cref{sec:cesar_explained}, we introduced \Cref{alg:basicSecAgg} and explained its operation with a masking requirement of $s=1$. 
Increasing $s$ strengthens collusion resistance in a precise sense: by \Cref{th:collusions}, a masking requirement of $s$ prevents any coalition of at most $s$ colluding adversaries from inferring the exact value of any parameter of the local model of a trusted node.
However, increasing $s$ raises concerns about the potential for some masks to be discarded while their corresponding opposite masks are not. Below%
, we show how the correctness of \sys remains unaffected by this. 

Let us consider a setting where $n_t$ trusted nodes honestly comply with \sys and $k$ adversarial nodes collude to share all the information they have with each other aiming to violate the privacy of the trusted nodes. Let $\mathcal{N}_T$ and $\mathcal{N}_A$, respectively, denote the sets of the trusted and the colluding adversarial nodes partitioning $\mathcal{N}$. In this setting, studying the privacy guarantees stays relevant only for the trusted nodes. In the subsequent analysis, we make the following assumption.

\begin{restatable}{assumption}{samemaskingreq}\label{corr:samemaskingreq}
    All trusted nodes have the same masking requirement when communicating with a fixed neighbor.   
\end{restatable}

\begin{restatable}{theorem}{numberofmasks}\label{lem:numberofmasks}

    For a fixed $p \in I$ and receiving node $N_j$, all $N_k\in \operatorname{View}(N_j)$, if $p\in I_{k}$, apply the same number of masks to the parameter value at index $p$ before discarding. 
\end{restatable}
\ifthenelse{\boolean{isextendedversion}}{
    \begin{proof}
       Postponed to \Cref{app:proof:numberofmasks}.
    \end{proof}
}{}

\milos{I added \Cref{lem:graphreduction} back again. (we had it ommitted for ICDCS since I believe due to exclusion of proofs we didn't see necessary keeping this considering it's only a building block)}
\begin{restatable}{lemma}{graphreduction}\label{lem:graphreduction}
  If the masking requirement of the nodes is $k$, without worsening the privacy of the trusted nodes, any graph under CESAR, where $|\mathcal{N}_A| \le k$, can be transformed into a graph where: 
\begin{enumerate}
    \item[i.] $|\mathcal{N}_A|=k$
    \item[ii.] All trusted nodes have a degree of at least 1
    \item[iii.] the nodes in $\mathcal{N}_A$ form a clique
    \item[iv.] $\{N_i, N_j\} \notin E$ for any $N_i, N_j \in \mathcal{N}_T$
\end{enumerate}
\end{restatable}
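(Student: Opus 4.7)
The plan is to exhibit a sequence of four local transformations that progressively enforce conditions (i)--(iv), and to verify that each step preserves or worsens, but never improves, the privacy of the trusted nodes. Throughout, I would measure privacy by the information the colluding adversaries can extract from the transcripts received by their members, invoking \Cref{lem:numberofmasks} and \Cref{corr:samemaskingreq} to ensure that masks continue to cancel properly in every intermediate graph.

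First I would address condition (i) by padding $\mathcal{N}_A$ up to size $k$ with extra adversarial nodes inserted without any incident edges. Such isolated nodes neither send nor receive any messages, so they contribute nothing to the adversary collective, and the transcript seen by the attackers is identical to that of the original graph. For condition (ii), a trusted node of degree $0$ does not participate in the protocol at all, so its privacy is trivially preserved in both graphs, and it can be removed from $\mathcal{N}_T$ without altering anything relevant to the remaining trusted nodes or to the adversaries.

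For condition (iii), I would add every missing edge inside $\mathcal{N}_A$. The subtle point is that enlarging $\operatorname{View}(A)$ for an adversary $A$ changes the set of masking partners used by every trusted neighbor of $A$ when it transmits to $A$. However, every newly introduced pairwise mask is jointly seeded with a member of the adversary collective, so the seed is known to the collective and the mask cancels in their combined view. The reconstructible information about trusted parameters is thus unchanged. For condition (iv), I would then remove every edge whose two endpoints are both trusted. Such an edge only governs the transcript exchanged between $T_i$ and $T_j$ themselves, which is never received by any adversary. Crucially, the edge is not required for $T_i$ and $T_j$ to serve as pairwise masking partners when they share a common adversarial neighbor: that relationship depends only on 2-hop reachability through the common neighbor, which is left intact. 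Any trusted node left isolated by this removal is eliminated by reapplying Step 2.

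The main obstacle I anticipate is justifying Step 3. Adding adversary--adversary edges modifies the neighborhoods that drive the mask-generation step of every trusted neighbor of those adversaries, and I would need to verify in detail that (a) every freshly introduced pairwise mask involves at least one adversary so its seed is known to the collective, and (b) \Cref{corr:samemaskingreq} continues to hold, so that no mask is dropped on one side while its counterpart is kept on the other. A secondary concern in Step 4 is that removing trusted--trusted edges could in principle lower the number of masks applied to some index of a trusted node below the masking requirement $s$ when transmitting to an adversary; but only edges sharing a common neighbor with the intended receiver generate the masks used for that transmission, and when the receiver is an adversary those edges are untouched by Step 4, so the count is preserved and \Cref{lem:numberofmasks} still guarantees cancellation.
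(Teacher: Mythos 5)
Your proposal is correct and follows essentially the same route as the paper: the same four local graph surgeries (pad $\mathcal{N}_A$ with isolated adversaries, delete isolated trusted nodes, complete the adversarial clique, delete trusted--trusted edges), each justified by showing it does not disturb the masks that protect trusted nodes. Your justifications for steps (iii) and (iv) are in fact slightly sharper than the paper's --- observing that every newly created mask involves an adversary and is therefore already in the collective's view, and that the masks governing a transmission to an adversary $A$ depend only on $\operatorname{View}(A)$, which trusted--trusted edge deletions leave untouched --- but these are refinements of the same argument rather than a different one.
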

\ifthenelse{\boolean{isextendedversion}}{
    \begin{proof}
       Postponed to \Cref{app:proof:graphreduction}.
    \end{proof}
}{}

\Cref{lem:numberofmasks} demonstrates that all neighbors of a node, which have selected the parameter at a predetermined position for transmission to the given node, have an identical number of masks applied to the parameter. 
Under \Cref{corr:samemaskingreq}, it is guaranteed that all such neighbors will either transmit the parameter, thereby cancelling out the masks through aggregation, or, \emph{all} of them will discard the parameter.
This ensures that all masks cancel out and the correctness of \sys remains unaffected. 

\begin{restatable}{theorem}{collusions}\label{th:collusions}
If $k$ is the masking requirement of every trusted node in the network, the true values of their model parameters will remain obfuscated from every node in the network when there are $k$ colluding adversarial nodes.   
\end{restatable}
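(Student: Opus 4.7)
The plan is to combine Lemma \ref{lem:graphreduction}, Lemma \ref{lem:numberofmasks}, and the sum-only inference argument from the proof of Theorem \ref{th:hbc_resilient}.

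First, I would invoke Lemma \ref{lem:graphreduction} to assume without loss of generality that $|\mathcal{N}_A| = k$, that the adversaries form a clique, that every trusted node has degree at least one, and that no edge runs between two trusted nodes. In this worst case every parameter a trusted node transmits is sent to some adversarial recipient, so it suffices to analyse that situation.

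Next, I would fix an arbitrary trusted sender $N_\ell$, an adversarial recipient $N_j \in \operatorname{View}(N_\ell)$, and an index $p$ that survives the discard step of \Cref{alg:cesarMaskAndSend} on its way from $N_\ell$ to $N_j$. By Lemma \ref{lem:numberofmasks} the number of masks covering $p$ in that transmission equals $|\operatorname{Sel}(N_j, p)| - 1$, so the masking requirement $k$ forces $|\operatorname{Sel}(N_j, p)| \ge k+1$. Since the adversaries form a clique and $N_j \notin \operatorname{View}(N_j)$, $\operatorname{View}(N_j) \cap \mathcal{N}_A$ consists of exactly the $k-1$ other adversaries, so $|\operatorname{Sel}(N_j, p) \cap \mathcal{N}_A| \le k-1$ and therefore $|\operatorname{Sel}(N_j, p) \cap \mathcal{N}_T| \ge 2$. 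In particular, there is some trusted $N_m \ne N_\ell$ in $\operatorname{Sel}(N_j, p)$ with which $N_\ell$ agreed on a pairwise mask $M'^{(p)}_{\ell m}$.

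The seeds defining $M'^{(p)}_{\ell m}$ are exchanged only between the two trusted nodes $N_\ell$ and $N_m$, so this mask is independent of every observation available to the coalition. Mirroring the argument in the proof of Theorem \ref{th:hbc_resilient}, the only way to cancel this unknown mask from the value the coalition received from $N_\ell$ is to add in the masked values contributed by every other node in $\operatorname{Sel}(N_j, p)$; once the masks contributed by the adversarial members (which the coalition knows) are subtracted off, what remains is the plain sum $\sum_{N_t \in \operatorname{Sel}(N_j, p) \cap \mathcal{N}_T} v_t^{(p)}$ over at least two unknown trusted parameters, from which no individual $v_t^{(p)}$ can be isolated. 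The same conclusion holds \emph{a fortiori} for any honest non-colluding observer, since each sees strictly less information than the full coalition.

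The main obstacle I expect is ruling out subtler attacks that combine masked values across different adversarial recipients, different indices, or different rounds. I plan to handle this by observing that every seed is freshly regenerated at each round of the prestep, so equations across rounds decouple; that the pairwise masks $M'^{(p)}_{to}$ are independent across $p$, so equations across indices decouple; and that within a fixed index and round a counting argument on the unknowns (the $m$ trusted values together with the $\binom{m}{2}$ inter-trusted pairwise masks, where $m := |\operatorname{Sel}(N_j, p) \cap \mathcal{N}_T| \ge 2$) against the $m$ equations the coalition can form shows that the unique solvable linear combination is the total sum, completing the argument.
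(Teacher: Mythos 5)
Your proposal is correct, and its core insight is exactly the one the paper relies on: after the worst-case reduction of \Cref{lem:graphreduction}, an adversarial recipient has at most $k-1$ adversarial co-neighbors, so a masking requirement of $k$ forces any transmitted parameter to carry at least one pairwise mask agreed with \emph{another trusted node}, and that mask is information-theoretically unavailable to the coalition. Where you differ is in the packaging. The paper phrases the argument as an induction over the number of trusted nodes $n_t$, splitting on whether the new trusted node shares an adversarial neighbor with an existing one; in fact the inductive hypothesis is never actually invoked in either case, so the induction is essentially a direct argument in disguise. You replace it with a direct counting step through \Cref{lem:numberofmasks}: surviving the discard test means $\lvert\operatorname{Sel}(N_j,p)\rvert-1\ge k$, hence $\lvert\operatorname{Sel}(N_j,p)\rvert\ge k+1$, and since at most $k-1$ of those selectors can be adversarial, at least two are trusted. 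This buys you a cleaner quantification (the paper's Case~2 only asserts the existence of the trusted partner $N_i$ via the shared neighbor, without the explicit count) and it also covers the situation the paper handles separately in its base case and Case~1, where the trusted node simply never meets the masking threshold. Your closing remarks on decoupling across rounds, indices, and recipients, and the unknowns-versus-equations count showing only the sum is recoverable, go beyond what the paper writes down (it stops at ``the privately chosen mask keeps the value hidden''); they are stated as a plan rather than executed, but the paper's own proof of \Cref{th:hbc_resilient} is no more detailed on this point, so this is not a gap relative to the paper's standard of rigor.
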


\ifthenelse{\boolean{isextendedversion}}{
    \begin{proof}
       Postponed to \Cref{app:proof:collusions}.
    \end{proof}
}{}

As \sys discards unmasked indices, thus reducing the percentage of model parameters intended for sharing in sparsification, understanding how this varies based on system parameters, masking requirements, and training configuration is vital to evaluate the practical application of \sys. 

Given the difficulty of exhaustively modeling every sparsification approach, we focus on a simplified yet realistic scenario. 
Here, each index is equally likely to be selected in sparsification, and the selection is mutually independent between nodes.
While this assumption does not hold for many sparsification methods, it perfectly characterizes random subsampling. 
We assume that every node chooses $\alpha$ fraction of their indices in sparsification. Since all nodes select the same fraction and the overall model sizes are identical, the probability $\pi_i^{(p)}$ for any node $i$ to select a given index $p$ is $\alpha$. This enable derivation of the expected number of model parameters shared by each node in the network under \sys capturing an environment when we have $k$ colluding nodes.  

\begin{restatable}{theorem}{expectedparamaterssharedwithcollusion}\label{th:expected_parameters_shared_collusion}
    \sys with random subsampling, where nodes have a masking requirement of $s$, if $\beta(\alpha, \delta, s)$ is the expected proportion of parameters shared by one node to another in a round, we have $\beta\left(\alpha,\delta,s\right)=\sum_{i=s}^{\delta-1} \binom{\delta-1}{i} \alpha^{i+1} (1-\alpha)^{\delta-1-i}$,
    where $\alpha$ is the probability of independently sampling any parameter by any node and $\delta=\left\lvert\operatorname{View}(\text{Receiver})\right\rvert$. 
\end{restatable}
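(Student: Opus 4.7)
The plan is to reduce the claim to a per-parameter indicator computation and then apply \Cref{lem:numberofmasks} so that the condition ``parameter $p$ is transmitted by the sender to the receiver'' becomes a simple event about independent Bernoulli selections. Fixing the sender $N_i$ and receiver $N_j$ with $|\operatorname{View}(N_j)| = \delta$, I would define for each parameter $p \in \mathcal{I}$ the indicator $X_p$ that equals $1$ iff $N_i$ ultimately transmits $p$ to $N_j$. The quantity of interest is $\beta(\alpha,\delta,s) = \mathbb{E}[X_p]$, since the expected proportion of shared parameters equals $\tfrac{1}{d}\mathbb{E}\!\left[\sum_p X_p\right]$ and, because random subsampling treats every index symmetrically with the same probability $\alpha$, this per-parameter probability is identical across $p$.

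Next I would decompose $X_p = 1$ into two conditions: (a) $p \in I_{N_i}$, and (b) by \Cref{lem:numberofmasks}, the mask count $|\mathrm{Sel}(N_j, p) \setminus \{N_i\}|$ is at least the masking requirement $s$. Condition (a) occurs with probability $\alpha$ because sampling is i.i.d.\ Bernoulli$(\alpha)$. Conditional on (a), $\mathrm{Sel}(N_j, p) \setminus \{N_i\}$ is determined by the independent selections of the remaining $\delta - 1$ neighbors of $N_j$, each of whom includes $p$ with probability $\alpha$, independently of $N_i$'s choice. Hence the number of other selecting neighbors is $\mathrm{Binomial}(\delta - 1, \alpha)$, and
\[
\Pr[X_p = 1] = \alpha \cdot \Pr\!\left[\mathrm{Binomial}(\delta - 1, \alpha) \ge s\right] = \sum_{i=s}^{\delta-1} \binom{\delta-1}{i}\alpha^{i+1}(1-\alpha)^{\delta-1-i},
\]
which is exactly the stated formula. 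The proportion claim then follows by linearity of expectation over the $d$ parameter positions.

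The main technical care, which I do not expect to be a genuine obstacle but is the only subtle step, is the bookkeeping of what $\delta - 1$ counts: by \Cref{lem:numberofmasks} the mask count equals $|\mathrm{Sel}(N_j, p)| - 1$ for any neighbor of $N_j$ that selected $p$, so conditioning on $N_i$ having selected $p$ turns ``at least $s$ masks'' into ``at least $s$ additional selectors among the other $\delta - 1$ neighbors of $N_j$.'' One must also confirm that independence across these neighbors is legitimate, which follows from the random-subsampling hypothesis that each node samples each index independently with probability $\alpha$, so no conditioning issue arises. With these two observations in place the identity reduces to a binomial tail and the theorem follows.
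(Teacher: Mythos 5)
Your proposal is correct and follows essentially the same route as the paper's proof: a per-parameter indicator, the factorization $\Pr[X_p=1]=\alpha\cdot\Pr[\mathrm{Binomial}(\delta-1,\alpha)\ge s]$, and linearity of expectation over the $d$ positions. Your explicit appeal to \Cref{lem:numberofmasks} to identify the mask count with the number of other selecting neighbors of the receiver is a slightly cleaner justification of a step the paper states informally, but it is not a different argument.
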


\ifthenelse{\boolean{isextendedversion}}{
    \begin{proof}
       Postponed to \Cref{app:proof:expected_parameters_shared_collusion}.
    \end{proof}
}{}

\begin{restatable}{corollary}{expectedparamatersshared}\label{th:expected_parameters_shared}
    In the absence of colluding nodes, \sys with random subsampling  ensures that the expected proportion of parameters shared by any node (sender) to another (receiver) in one round is given by $\beta\left(\alpha, \delta, 1\right)=\alpha \left(1-(1-\alpha)^{\delta - 1}\right)$,
    where $\alpha$ is the probability of (independently) sampling any parameter by any node and $\delta=\left\lvert\operatorname{View}(\text{Receiver})\right\rvert$. 
\end{restatable}
\ifthenelse{\boolean{isextendedversion}}{
    \begin{proof}
       Postponed to \Cref{app:proof:expected_parameters_shared}.
    \end{proof}
}{}

By \Cref{th:expected_parameters_shared}, as $\delta$ increases, $\beta$ converges to $\alpha$, \ie, the expected proportion of shared parameters becomes equal to the expected proportion of parameters selected in sparsification. Moreover, this convergence is faster for larger values of $\alpha$.

\subsubsection{Communication overhead}
\label{sec:overhead_analysis}

In this analysis, $\alpha$ fraction of indices of a model of size $d$ is selected in sparsification. The maximum degree of any node in the graph is $\delta_\text{max}$.

The communication overhead incurred by \sys occurs primarily in the prestep stage. 
During this phase, each node dispatches a single message to the neighbors of its neighbors. 
The number of these message transmissions is bounded by $O(\delta_\text{max}^2)$.
Each of these messages comprises a partial seed of $O(1)$, and a set of indices selected in sparsification of size $O(\alpha d)$.
Consequently, the total communication overhead for the prestep in \sys amounts to $O(\alpha d \delta_\text{max}^2)$.
 
\section{Evaluation}
\label{sec:eval}

Our evaluation validates \sys's performance across three critical dimensions: utility, efficiency, and privacy. Specifically, we structure our experiments to answer the following: \textbf{RQ1 (Utility \& Efficiency):} Can CESAR match the accuracy of non-private baselines (D-PSGD) while significantly reducing communication costs? (\Cref{sec:eval_different_datasets,sec:full_secagg_comparison}) \textbf{RQ2 (Robustness):} How does the masking requirement $s$ affect convergence and resilience against collusion? (\Cref{sec:eval_collusion,sec:masking_req_experiments}) \textbf{RQ3 (Privacy):} Does the protocol effectively limit information leakage against inference attacks compared to standard baselines? (\Cref{sec:exp:partial_leakage_eval}) \textbf{RQ4 (Resilience):} Can CESAR maintain model performance in volatile networks with frequent node dropouts? (\Cref{sec:exp:dropout_handling}). We evaluate \sys against \dsgd across multiple graph topologies, sparsification fractions, and five datasets with models ranging from 30K to 124M parameters.
The implementation is available at \codeurl{}. It uses Python 3.6 with the \emph{decentralizepy} library~\cite{decentralizepy}, built on PyTorch.

\subsection{Experimental Setup}

\textbf{Datasets and Hyperparameters.}
Experiments span \cifar, \celeba, \qasc, \movielens, and \xsum.
\qasc prompts use \ac{P3}~\cite{bach2022promptsource}.
These cover image classification (\cifar, \celeba), question answering (\qasc), recommendation (\movielens), and text summarization (\xsum) tasks.

Training data is partitioned so no two nodes share samples and the full test set is used to evaluate performance at each test step (\xsum uses the first \num{5000} of \num{11334} test samples).
For IID distribution, datasets are shuffled and split into equal chunks per node.
\movielens is inherently non-IID.
For non-IID partitioning performed only on \cifar, labels are sorted and split into $2n$ chunks, with each node receiving 2 chunks (limiting each node to at most 4 of 10 classes).

Training uses SGD without momentum and hyperparameters are tuned via grid search on \dsgd with full sharing.
\Cref{tab:hyperparameters} lists all hyperparameters, dataset sizes, and models.

\begin{table*}[ht]
    \centering
    \caption{Dataset-specific training hyperparameters, properties, and models.}
    \label{tab:hyperparameters}
    \setlength{\tabcolsep}{4pt}
    \begin{tabular}{l|ccccc}
        \toprule
         & \textbf{\cifar} & \textbf{\celeba} & \textbf{\qasc} & \textbf{\movielens} & \textbf{\xsum}\\
        \midrule
        Learning rate & 0.01 & 0.001 & 0.001 & 0.05 & 0.01\\
        Batch size & 8 & 8 & 4 & 64 & 8\\
        Comm. rounds/epoch & 20 & 10 & 2 & 22 & 15\\
        SGD steps/round & 6 & 10 & 31 & 1 & 53\\
        \midrule
        Training samples & 50K & 63.7K & 8.1K & 70K & 204K\\
        Task & Img. class. & Img. class. & QA & Recommend. & Text summ.\\
        Model & 4 Conv2D & GN-LeNet~\cite{hsiehskewscout2020} & GPT2-small~\cite{gpt2paper} & Matrix Fact.~\cite{korenmatrixfactorization2009} & T5-Eff.-TINY~\cite{2020t5}\\
        Parameters & 89.8K & 30.2K & 124.4M & 206.7K & 15.6M\\
        \bottomrule
    \end{tabular}
\end{table*}

\textbf{Hardware and topology.}
We use two hardware configurations.
\cifar, \celeba, and \movielens run on three machines, each with a 32-core Intel Xeon E5-2630 v3 (2.40~GHz). %
These experiments use a 48-node network (16 nodes per machine).
\xsum and \qasc run on a cluster with 128 AMD EPYC 7543 cores (2.80~GHz), one NVIDIA A100-80GB (\xsum) or three NVIDIA H100-80GB (\qasc). %
These simulate a 32-node network.
Aggregation runs entirely on CPU, henceforth GPUs affect only local training speed.
All experiments use randomly generated $\delta$-regular graphs, with $\delta \approx \log_2(n)$, a common choice in decentralized networks.
Node count remains fixed throughout.
\Cref{sec:overhead_analysis} confirms that protocol overhead scales with node degree rather than total nodes.
All results are averaged across 5 runs with different seeds. %
Unless stated otherwise, masking requirement $s=1$. \Cref{sec:masking_req_experiments} examines varying $s$.

\textbf{Metrics.}
We evaluate test loss on all datasets and top-1 accuracy on \cifar, \celeba, and \qasc.
Models trained on \xsum are evaluated at training completion using \rougel~\cite{lin2004rouge}, standard for text summarization tasks~\cite{10.1145/3512467}.
We also measure total data transferred per node, categorized in: (i) mask coordination overhead, (ii) masked model parameters, and (iii) metadata (indices).

\textbf{Baselines.}
\dsgd with Metropolis-Hastings weights \cite{XIAO200465} is used as the baseline for \sys.
\sys supports sparsification methods that do not carry information from one round to the next.
Therefore, to test the capabilities of \sys, we run experiments over two prominent sparsification schemes: random subsampling and \topk sparsification.
Both are run for different fractions of selected indices.
We use \topk only on \iid data since previous work has shown that \topk performs poorly in \niid settings~\cite{dhasade:2023:jwins}.

In all experiments, we invert \Cref{th:expected_parameters_shared} using bisection to set sparsification fractions ($\alpha$) such that the fraction of parameters shared with neighbors ($\beta$) in \sys closely matches the \SI{30}{\%} and \SI{50}{\%} marks.
Specifically with masking requirement $s=1$, for nodes with a degree \num{3}, corresponding $\alpha$s are \SI{43.83}{\%} and \SI{59.70}{\%} for \SI{30}{\%} and \SI{50}{\%} parameter sharing, respectively. For nodes of degree \num{6}, these values are \SI{34.22}{\%} and \SI{51.39}{\%}.
For nodes of degree \num{5}, a sparsification fraction of \SI{36.03}{\%} gives \SI{30}{\%} parameter sharing. We do not use the \SI{50}{\%} setting for degree \num{5}.
Furthermore, to ensure that both algorithms share the exact same fraction of parameters, we first run \sys and measure the mean fraction of shared weights across all nodes and runs and use the obtained value as the fraction of parameters shared in \dsgd with the same settings.

Because \sys works on the basis of obfuscating parameters using masks, we cannot compress the masked parameters. To keep the evaluation consistent, the same is done for \dsgd.
On the other hand, Elias gamma compression is used for compressing indices in both algorithms.

\subsection{Performance}

\subsubsection{Evaluation Across Multiple Datasets}
\label{sec:eval_different_datasets}

\begin{figure*}[ht]
	\centering
	\includegraphics{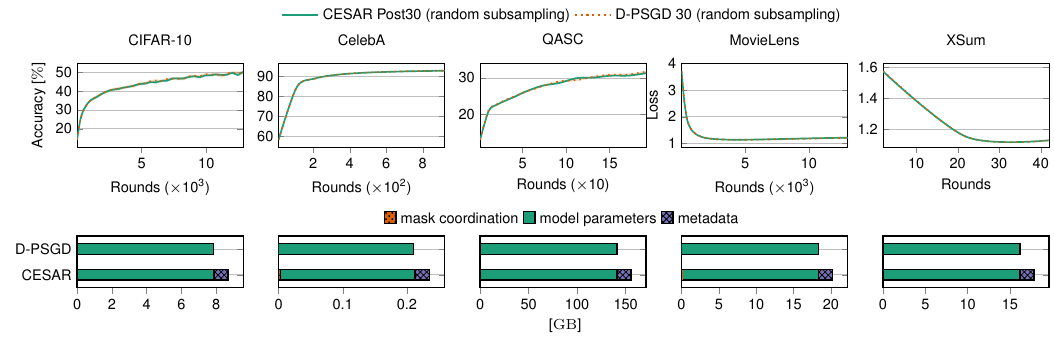}
	\caption{Comparison of performance and per node communication cost of \sys against \dsgd with matching configuration over multiple datasets using random subsampling. \sys is overlapping \dsgd in the performance plots.}
	\label{fig:dataset_eval_comparison_rand}
\end{figure*}

\begin{figure}[ht]
	\centering
	\includegraphics{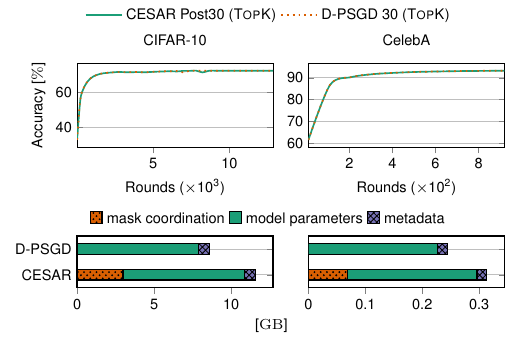}
	\caption{Comparison of performance and per node communication cost of \sys against \dsgd with matching configuration over multiple datasets using \topk. \sys is overlapping \dsgd in the performance plots.}
	\label{fig:dataset_eval_comparison_topk}
\end{figure}

We compare \sys against \dsgd across \cifar, \celeba, \movielens, and \xsum on a 6-regular graph with 30\% parameter sharing.
\qasc uses a 5-regular graph under the same conditions.
\xsum and \qasc run on 32 nodes, the remaining datasets use 48 nodes.

\Cref{fig:dataset_eval_comparison_rand} presents results with random subsampling (\cifar and \movielens \niid; \celeba, \qasc, and \xsum \iid).
\Cref{fig:dataset_eval_comparison_topk} presents results with \topk (\cifar \iid; \celeba \niid).
On \xsum, \tinytransformer achieves \rougel scores of 14.73 with \dsgd and 14.72 with \sys, both starting from 0.00.

\sys matches \dsgd performance while incurring at most 35\% additional data transfer with \topk and at most 11\% with random subsampling.
These results confirm that \sys can replace \dsgd without needing hyperparameter change.

We observe that overhead data is distributed differently between random subsampling and \topk.
In random subsampling, the mask coordination overhead is negligible, as only a seed for generating a random sparsification mask is transmitted.
However, after intersecting these masks, the result must be represented as a full set of indices, requiring more metadata later.
In contrast, \topk always transfers the selection as a full set of indices, which are sent to all second-degree neighbors during the mask coordination, and this scales with the square of the node degree in the network.

\subsubsection{In-Depth Analysis on \cifar}
\label{sec:eval_different_settings}

We analyze \sys on \cifar across 3- and 6-regular graphs at 30\% and 50\% parameter sharing, using \topk for \iid data and random subsampling for \niid.

\Cref{tab:settings_eval} and \Cref{fig:cifar_cesar} present the results.
These empirical findings validate theoretical predictions from \Cref{th:expected_parameters_shared}.

\sys accuracy is comparable to \dsgd across all configurations.
On \niid data with random subsampling, accuracy decreases by up to 0.46\% on 3-regular and 0.18\% on 6-regular graphs.
This reduction stems from precision loss during integer conversion: parameters are converted to integers retaining six decimal places to fit the mask data type.

On \iid data with \topk, \sys outperforms \dsgd by up to 0.3\% on 3-regular and 0.03\% on 6-regular graphs.
We hypothesize this improvement arises because each parameter in \sys is received from at least two neighbors improving model generalization.

Primarily due to mask coordination and the sharing of local sparsification results with neighbors, \sys incurs a data overhead compared to \dsgd with sparsification, ranging between \SI{12}{\%} and \SI{35}{\%} for \topk, and between 7\% and 11\% for random subsampling.
Furthermore, this overhead varies with the degree of the graph.
In random subsampling, it remains nearly constant regardless of the degree since transferring pseudo-random number generator seeds suffices for calculating intersections.
With \topk, however, we observe a significant increase as we move from degree 3 to 6 due to transfer of the selected indices.

\begin{figure}[t]
	\centering
	\includegraphics{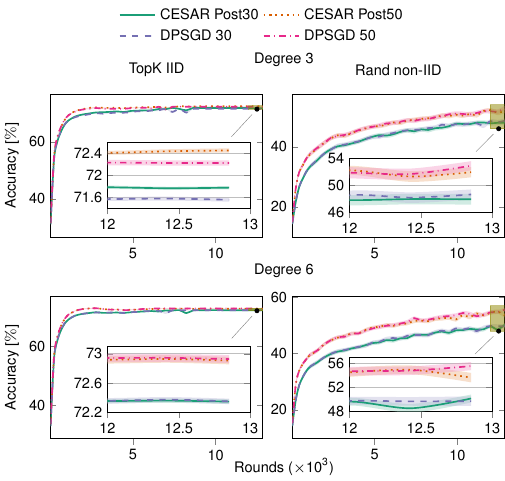}
	\caption{The accuracy comparison of \sys against \dsgd over various configurations, data distributions and network degrees.}
	\label{fig:cifar_cesar}
\end{figure}

\begin{table*}[!ht]
    \centering
    \caption{Performance comparison of \sys against \dsgd over regular graphs of various degrees and fractions of model shared. Time format: HH:MM (hours:minutes). In all experiments, standard error for elapsed time $< 1$ minute.}
    \label{tab:settings_eval}
    \setlength{\tabcolsep}{4pt} %
    \begin{tabular}{cc|c|c|cc|cc|cc}
        \toprule
        \textbf{Distribution} & \textbf{Sparsification} & \textbf{Degree} & \textbf{\% shared} & \multicolumn{2}{c|}{\textbf{Max. Acc. [\%]}} & \multicolumn{2}{c|}{\textbf{Data [GB]}} & \multicolumn{2}{c}{\textbf{Time [HH:MM]}}\\
        & & & \textbf{weights} & \textbf{\sys} & \textbf{D-PSGD} & \textbf{\sys} & \textbf{D-PSGD} & \textbf{\sys} & \textbf{D-PSGD}\\
        \midrule
        \multirow{4}{*}{IID} & \multirow{4}{*}{\topk} & \multirow{2}{*}{3} & 31.02 & 71.88 & 71.58 & 5.22 & 4.41 & 03:19 & 02:49 \\
         & & & 50.48 & 72.50 & 72.29 & 7.90 & 7.03 & 03:30 & 02:53 \\
         \cmidrule{3-10}
         & & \multirow{2}{*}{6} & 30.13 & 72.42 & 72.40 & 11.56 & 8.58 & 03:44 & 02:35\\
         & & & 49.67 & 73.12 & 73.09 & 17.28 & 13.83 & 04:17 & 02:44\\
        \midrule
        \multirow{4}{*}{non-IID} & \multirow{4}{*}{\begin{tabular}{@{}c@{}}random\\subsamp.\end{tabular}} & \multirow{2}{*}{3} & 30.00 & 48.64 & 49.06 & 4.34 & 3.92 & 02:58 & 02:40\\
         & & & 50.00 & 52.54 & 53.00 & 7.01 & 6.53 & 03:04 & 02:41\\
         \cmidrule{3-10}
         & & \multirow{2}{*}{6} & 30.00 & 50.24 & 50.38 & 8.69 & 7.85 & 03:06 & 02:44\\
         & & & 50.00 & 55.44 & 55.62 & 14.04 & 13.07 & 03:21 & 02:45\\
        \bottomrule
    \end{tabular}
\end{table*}

\subsubsection{Comparison with secure aggregation}
\label{sec:full_secagg_comparison}

\begin{figure}[t]
	\centering
	\includegraphics{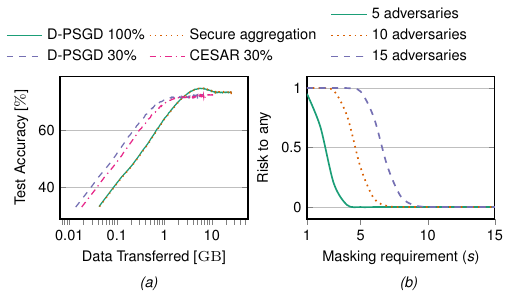}
	\caption{\emph{(a)} Data transferred to reach a certain accuracy for secure aggregation and full sharing against \dsgd and \sys for 30\% sharing. \emph{(b)} Estimated attack risk to any honest node in randomly generated 100 node graph with 25-regular topology and a fixed number of colluding adversaries.}
	\label{fig:3_in_1}
\end{figure}

In this section, we examine how the amount of data transferred in \sys compares directly with secure aggregation.
This analysis is performed on \iid \cifar dataset in a 6-regular 48-node network.
We consider scenarios where 30\% of indices are shared using \sys and \dsgd, as well as cases where all indices are shared with \dsgd and secure aggregation.
The results are shown in \Cref{fig:3_in_1}a.

The results reveal that models trained using \sys not only achieve higher accuracy for the same amount of data compared to secure aggregation, but they also outperform \dsgd with full sharing.
However, this advantage is observed only up to the point where the models trained with \sys converge.
These findings indicate that despite the overheads incurred during \sys, it retains the benefits of integrating sparsification, allowing it to achieve same accuracy at lower cost in comparison to secure aggregation.

\subsubsection{Configuring for Minimized Collusion Risks}
\label{sec:eval_collusion}

\Cref{sec:collusion_theory} discusses formal privacy guarantees in systems with known number of adversaries.
However, this is often not the case, and the number of adversaries has to be estimated.
Therefore, we study how the privacy risk changes as we increase the masking requirement~($s$) in \sys for a  25-regular graph with 100 nodes in settings with different number of adversaries.

This is performed by running a Monte Carlo simulation \num{250000} times for each masking requirement.
Each simulation involves generating a random 25-regular graph and assigning each node to be either adversarial or honest.
All adversarial nodes are assumed to collude with each other.
Over the simulation, we measure the probability that any honest node in the network is at risk of adversaries.
A node is considered to be at risk if any of their parameters can be inferred by the adversaries.
This can only happen when the node is connected to an adversary who has at least $s$ adversarial neighbors.

The simulation results are shown in \Cref{fig:3_in_1}b.
The results reveal that even for masking requirement lower than the total number of adversaries, yet sufficiently high, the risk can be practically nonexistent to any node in the network.
For instance, for the setting with 15 adversaries we do not come across any graphs under risk for masking requirement $\ge 13$ in any of our simulations.
Furthermore, with masking requirement $9$ the risk is $1.45\%$ in this setting.

\subsubsection{Effects of Masking Requirement}
\label{sec:masking_req_experiments}

Having seen how increasing $s$ significantly reduces collusion risk, we now examine its impact on convergence and communication overhead.
All experiments use a 6‐regular 48‐node network sharing \SI{30}{\%} of model parameters, on \cifar with both \iid and \niid splits. We compare two sparsification strategies: \topk and random subsampling.
To maintain a constant \SI{30}{\%} of parameter sharing, raising $s$ requires increasing the percentage of parameters selected in local sparsification~$\alpha$ (from \SI{34.21}{\%} at $s=1$, to \SI{42.53}{\%} at $s=2$, and \SI{53.34}{\%} at $s=3$).

\Cref{fig:masking_requirement_collusion_exp} summarizes convergence and total data exchanged of \sys against \dsgd with 30\% sharing.
With random subsampling the actual percentage of shared parameters stays within \SI{0.001}{\%} of the \SI{30}{\%} target for all $s$, and total data exchanged is within \SI{0.003}{\%} of \dsgd. Maximum accuracy varies by at most \SI{0.4}{\%} points relative to \dsgd, indicating that increasing $s$ has a negligible effect on both learning and communication when using random subsampling, while significantly strengthening the privacy guarantees of \sys. %

Meanwhile, with \topk, as $s$ grows, maintaining a \SI{30}{\%} share becomes more challenging: at $s=3$, we end up sharing \SI{32.13}{\%}. This higher share naturally results in up to \SI{0.3}{\%} points better accuracy than \dsgd. However, mask‐coordination overhead also rises, since each node shares a larger set of indices selected in local sparsification. In fact, moving from $s=1$ to $s=3$ increases total data exchanged by \SI{9.1}{\%}. Thus, as $s$ is increased, alongside improving privacy guarantees, \sys incurs a marginal communication overhead under \topk sparsification.

\begin{figure}[t]
	\centering
	\includegraphics{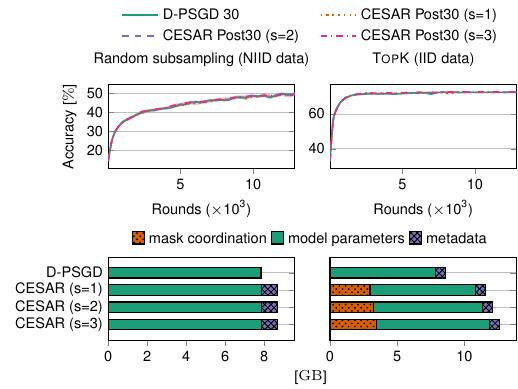}
	\caption{Comparison of the convergence of \sys without collusion protection (s=1) and with protection against two (s=2) and three (s=3) colluding nodes, against \dsgd for the same inteded percentage of shared parameters.}
	\label{fig:masking_requirement_collusion_exp}
\end{figure}

\subsubsection{Privacy leakage from partial sums}
\label{sec:exp:partial_leakage_eval}

We have seen how \sys{} keeps local models private in the \ac{HbC} setting and how these guarantees extend to the presence of colluding nodes. However, even if the exact parameters of a local model remain private, adversaries can collude to learn aggregated information. For example, consider node $N_r$ with local model $\theta_r$ and neighbors $N_1$, $N_2$, and $N_3$, with models $\theta_1$, $\theta_2$, and $\theta_3$, respectively. Suppose $N_r$ colludes with $N_1$ to maximize information extraction. While $\theta_2$ and $\theta_3$ individually remain hidden, the adversaries can infer the sum $\theta_2 + \theta_3$ by subtracting their known models $\theta_r$ and $\theta_1$ from the collective sum $\theta_r + \theta_1 + \theta_2 + \theta_3$.

In this section, we analyze privacy leakage as a function of the number of models in a partial sum.
We evaluate on a 32-node 5-regular graph using \niid{} \cifar{}. This partitioning ensures that local training distributions are both heterogeneous across nodes and distant from the test distribution, thereby amplifying any leakage and enabling clear comparisons. At the end of each global epoch, every node selects a random neighbor and launches a threshold \ac{MIA}~\cite{yeom2018privacyriskmachinelearning} on the last model received from them, measuring the \ac{ROC-AUC}. It then adds a model of another neighbor (from the same communication round) to the model of the victim, averages the two, and reruns the attack. This process is repeated, cumulatively adding one neighbor at a time, until models of all neighbors have been incorporated.

We evaluate leakage only for the initially selected victim: \emph{member} dataset is the entire local training set of the victim, and \emph{non-member} dataset is the global test set. Although in realistic scenarios an adversary would not access the full training set, this setup provides an upper bound on privacy leakage.

We train using \dsgd{} and repeat the experiment under three sharing strategies: full parameter sharing, random subsampling, and \topk{} sparsification with \SI{30}{\%} parameter selection. For each fixed partial-sum size, we average results over all nodes and communication rounds. \Cref{fig:mia_weakness} shows the privacy leakage (\ac{ROC-AUC}) as a function of the number of models aggregated in the sum.

These results offer several insights:
\begin{inparaenum}[(i)]
  \item Applying sparsification already reduces leakage compared to full sharing, although different sparsification methods offer varying privacy benefits.
  \item Partial sums leak more than fully aggregated models but less than individual local models, indicating that secure aggregation still mitigates leakage under collusion.
  \item There is a sharp privacy gain as soon as a partial sum includes more than the model of the victim alone.
  \item Compared to a single local model, the fully aggregated model leaks substantially less: its \ac{ROC-AUC} decreases by 0.24 under full sharing, 0.17 under \topk{}, and 0.11 under random subsampling.
\end{inparaenum}

\begin{figure}[ht]
	\centering
    \includegraphics{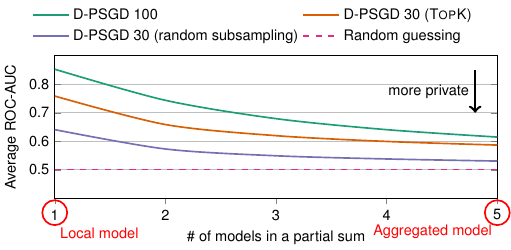}
	\caption{Privacy leakage from partial sums under a threshold attack on \niid{} \cifar.}
	\label{fig:mia_weakness}
\end{figure}

\subsubsection{Effect of dropouts}
\label{sec:exp:dropout_handling}

\begin{figure}[ht]
	\centering
    \includegraphics{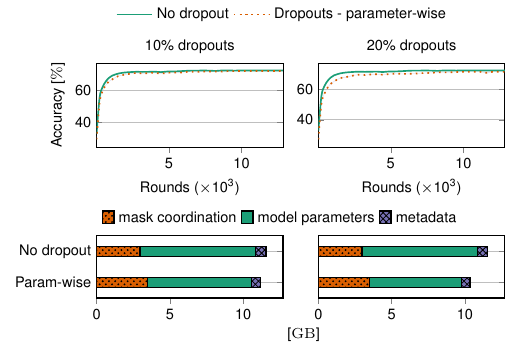}
	\caption{Comparison of model accuracy and data exchanged by \sys without dropouts, and with the parameter-wise dropout handling add-on under \SI{10}{\%} and \SI{20}{\%} dropouts.} %
	\label{fig:dropout_handling}
\end{figure}

\Cref{sec:handling_dropouts} modifies \sys to handle node dropouts. We evaluate this on a \num{48}-node \num{6}-regular network training on \iid \cifar with \topk sparsification. Each node selects \num{34.22}\% of parameters, yielding approximately \num{30}\% sharing after discarding unmasked parameters; the masking requirement is $s=1$.

Each round, nodes crash independently with fixed probability and rejoin the next round. Crashes are simulated, with neighbors notified directly rather than by timeout. We test \num{10}\% and \num{20}\% crash rates under worst-case timing: dropouts occur after mask coordination but before masked models are sent.

The results of this experiment are shown in \Cref{fig:dropout_handling}.
In this setting, \sys achieves a maximum accuracy of \num{72.42}\% without dropouts and reaches \num{72.06}\% and \num{71.51}\% accuracy for \num{10}\% and \num{20}\% dropouts, respectively, when the parameter-wise dropout handling is used.
These results demonstrate even with dropouts, the maximum achieved accuracy and convergence rate remain relatively stable, although higher dropout rates result in lower final accuracy.

Additionally, without dropouts, \sys exchanges \SI{11.58}{GB} of data. With parameter-wise dropout handling, the system exchanges \SI{11.19}{GB} and \SI{10.34}{GB} of data for \num{10}\% and \num{20}\% dropouts, respectively. This shows that while the add-on requires additional communication (due to sharing indices selected during local sparsification with immediate neighbors), the overall data exchanged is still reduced, as crashed nodes do not share their masked models.

Finally, without the add-on and without dropouts, the experiment takes an average of 4 hours and 23 minutes to complete with \sys. In contrast, using the add-on increases the runtime to 5 hours and 4 minutes for \num{10}\% dropouts and 5 hours and 18 minutes for \num{20}\% dropouts, representing an increase in execution time of \num{16}\% and \num{21}\%, respectively.

\subsection{Summary of Findings}
Our systematic evaluation confirms that \sys successfully reconciles the trade-offs inherent to \ac{DL}.
Regarding \textbf{RQ1}, we demonstrated that \sys matches \dsgd accuracy across all tasks while reducing data exchange by up to \SI{66}{\percent} compared to full-parameter schemes.
For \textbf{RQ2} and \textbf{RQ3}, our results show that increasing the masking requirement $s$ effectively neutralizes collusion risks with negligible impact on convergence, while providing stronger defense against \ac{MIA} than standard aggregation.
Finally, addressing \textbf{RQ4}, we showed that our parameter-wise dropout handling maintains high accuracy even under \SI{20}{\percent} failure rates.
Collectively, these results validate \sys as a robust, general-purpose protocol for private and efficient \ac{DL}.

\section{Related Work}
\label{sec:rw}

\textbf{Privacy attacks.}
Gradient inversion~\cite{zhu2019deep}, \ac{MIA}~\cite{shokriMembershipInferenceAttacks2017}, and attribute inference~\cite{zhao2021feasibility} attacks extract training-set information from exchanged models, including in \dsgd~\cite{pasquini2023security}.
Although sparsification has been proposed as a defense~\cite{shokri2015privacy}, Yue~\etal~\cite{yue2023gradient} demonstrate gradient inversion on \topk-sparsified gradients.
These attacks remain poorly understood under secure aggregation.
Because attacks on averaged models affect \sys and conventional secure aggregation equally, we consider them orthogonal to this work.

\textbf{Secure aggregation.}
Secure aggregation conceals individual updates while revealing only their aggregate, and has become a standard privacy mechanism in \ac{FL}.
Most protocols build on Bonawitz~\etal~\cite{bonawitz2017practical}, using pairwise additive masks and Shamir secret sharing~\cite{shamir1979share} for dropout recovery, typically coordinated by an untrusted server.
Subsequent work reduces communication: BBGLR~\cite{cryptoeprint:2020/704} limits each client's mask partners to a small subset, and Flamingo~\cite{cryptoeprint:2023/486} amortizes mask agreement across rounds.
These optimizations target \ac{FL}'s single large-scale aggregation per round and they do not transfer directly to \ac{DL}, which performs many small aggregations among few peers.
In comparison prior secure-aggregation work in \ac{DL} is limited~\cite{GUPTA20179515}.
\sys is the first protocol to integrate sparsification without additional servers, preserving the fully decentralized model.

\textbf{Secure aggregation with sparsification.}
Model-size reduction via compression~\cite{collet:lz4:2022}, sparsification~\cite{alistarh2018sparseconvergence}, and quantization~\cite{seide2014-bit} is well studied, yet its integration with secure aggregation remains limited to \ac{FL}.
SparseSecAgg~\cite{ergun2021sparsified} embeds a fixed sparsification mechanism into the aggregation protocol, restricting compatibility with other methods.
Lu~\etal~\cite{lu2023top} incorporate \topk by masking the \emph{union} of selected indices across clients.
Both approaches rely on a central server and therefore cannot be trivially adapted to \ac{DL}.
In contrast, \sys masks only the \emph{intersection} of locally selected indices and operates without any server. 
\section{Conclusion}
\label{sec:conclusion}

\sys is the first secure aggregation protocol for \ac{DL} that natively supports sparsification and node dropouts without requiring any server.
\sys reconciles the tension between privacy, communication efficiency, and utility by masking only the intersection of locally selected indices, ensuring that pairwise masks cancel upon aggregation.
Formal analysis establishes privacy guarantees against \ac{HbC} and colluding adversaries.
Experiments on models with up to 124M parameters show that \sys matches the accuracy of state-of-the-art baselines while reducing data exchange by 66\% compared to full-parameter sharing, and incurring only a 7–35\% communication overhead compared to plain (unsecure) sparsification.
Further optimization of \sys for various network topologies and more complex models is a promising direction for future work.
Integrating \sys into real-world applications holds great potential to advance secure, efficient, and privacy-preserving \ac{DL} systems significantly. 

\bibliographystyle{IEEEtran}
\bibliography{IEEEabrv,references/main}

\begin{thebibliography}{10}
\providecommand{\url}[1]{#1}
\csname url@samestyle\endcsname
\providecommand{\newblock}{\relax}
\providecommand{\bibinfo}[2]{#2}
\providecommand{\BIBentrySTDinterwordspacing}{\spaceskip=0pt\relax}
\providecommand{\BIBentryALTinterwordstretchfactor}{4}
\providecommand{\BIBentryALTinterwordspacing}{\spaceskip=\fontdimen2\font plus
\BIBentryALTinterwordstretchfactor\fontdimen3\font minus
  \fontdimen4\font\relax}
\providecommand{\BIBforeignlanguage}[2]{{%
\expandafter\ifx\csname l@#1\endcsname\relax
\typeout{** WARNING: IEEEtran.bst: No hyphenation pattern has been}%
\typeout{** loaded for the language `#1'. Using the pattern for}%
\typeout{** the default language instead.}%
\else
\language=\csname l@#1\endcsname
\fi
#2}}
\providecommand{\BIBdecl}{\relax}
\BIBdecl

\bibitem{ramezan_effects_2021}
C.~A. Ramezan, T.~A. Warner, A.~E. Maxwell, and B.~S. Price, ``Effects of
  training set size on supervised machine-learning land-cover classification of
  large-area high-resolution remotely sensed data,'' \emph{Remote Sensing},
  vol.~13, no.~3, p. 368, 2021.

\bibitem{sheller2018multiinstitutionaldeeplearningmodeling}
M.~J. Sheller, G.~A. Reina, B.~Edwards, J.~Martin, and S.~Bakas,
  ``Multi-institutional deep learning modeling without sharing patient data: A
  feasibility study on brain tumor segmentation,'' 2018.

\bibitem{firstdlpaper}
A.~Lalitha, S.~Shekhar, T.~Javidi, and F.~Koushanfar, ``Fully decentralized
  federated learning,'' in \emph{Third workshop on bayesian deep learning
  (NeurIPS)}, vol.~2, 2018.

\bibitem{keyboardImprovementFL}
T.~Yang, G.~Andrew, H.~Eichner, H.~Sun, W.~Li, N.~Kong, D.~Ramage, and
  F.~Beaufays, ``Applied federated learning: Improving google keyboard query
  suggestions,'' 2018.

\bibitem{radiationoncologyFL}
D.~Jarrett, E.~Stride, K.~Vallis, and M.~J. Gooding, ``Applications and
  limitations of machine learning in radiation oncology,'' \emph{The British
  journal of radiology}, vol.~92, no. 1100, p. 20190001, 2019.

\bibitem{lian2017dpsgd}
X.~Lian, C.~Zhang, H.~Zhang, C.-J. Hsieh, W.~Zhang, and J.~Liu, ``Can
  decentralized algorithms outperform centralized algorithms? a case study for
  decentralized parallel stochastic gradient descent,'' in \emph{NIPS}, 2017.

\bibitem{alistarh2018sparseconvergence}
D.~Alistarh, T.~Hoefler, M.~Johansson, S.~Khirirat, N.~Konstantinov, and
  C.~Renggli, ``The convergence of sparsified gradient methods,'' in
  \emph{NeurIPS}, 2018.

\bibitem{tang2020sparsification}
Z.~Tang, S.~Shi, and X.~Chu, ``Communication-efficient decentralized learning
  with sparsification and adaptive peer selection,'' in \emph{2020 IEEE 40th
  International Conference on Distributed Computing Systems (ICDCS)}, 2020.

\bibitem{lin2020deepgradientcompressionreducing}
Y.~Lin, S.~Han, H.~Mao, Y.~Wang, and W.~J. Dally, ``Deep gradient compression:
  Reducing the communication bandwidth for distributed training,'' 2020.

\bibitem{shokriMembershipInferenceAttacks2017}
R.~Shokri, M.~Stronati, C.~Song, and V.~Shmatikov, ``Membership inference
  attacks against machine learning models,'' in \emph{2017 {{IEEE Symposium}}
  on {{Security}} and {{Privacy}}, {{SP}} 2017, {{San Jose}}, {{CA}}, {{USA}},
  {{May}} 22-26, 2017}, 2017, pp. 3--18.

\bibitem{carlini2022membership}
N.~Carlini, S.~Chien, M.~Nasr, S.~Song, A.~Terzis, and F.~Tramer, ``Membership
  inference attacks from first principles,'' in \emph{2022 IEEE Symposium on
  Security and Privacy (SP)}.\hskip 1em plus 0.5em minus 0.4em\relax IEEE,
  2022, pp. 1897--1914.

\bibitem{cyffers2022muffliato}
E.~Cyffers, M.~Even, A.~Bellet, and L.~Massouli\'{e}, ``Muffliato: Peer-to-peer
  privacy amplification for decentralized optimization and averaging,'' in
  \emph{Advances in Neural Information Processing Systems}, S.~Koyejo,
  S.~Mohamed, A.~Agarwal, D.~Belgrave, K.~Cho, and A.~Oh, Eds., vol.~35.\hskip
  1em plus 0.5em minus 0.4em\relax Curran Associates, Inc., 2022, pp.
  15\,889--15\,902.

\bibitem{biswas2025low}
S.~Biswas, D.~Frey, R.~Gaudel, A.-M. Kermarrec, D.~Ler{\'e}v{\'e}rend,
  R.~Pires, R.~Sharma, and F.~Ta{\"\i}ani, ``Low-cost privacy-preserving
  decentralized learning,'' \emph{Proceedings on Privacy Enhancing
  Technologies}, 2025.

\bibitem{biswas2025noiseless}
S.~Biswas, M.~Even, A.-M. Kermarrec, L.~Massoulié, R.~Pires, R.~Sharma, and
  M.~de~Vos, ``Noiseless privacy-preserving decentralized learning,''
  \emph{Proceedings on Privacy Enhancing Technologies}, vol. 2025, no.~1, p.
  824–844, Jan. 2025.

\bibitem{bonawitz2017practical}
K.~Bonawitz, V.~Ivanov, B.~Kreuter, A.~Marcedone, H.~B. McMahan, S.~Patel,
  D.~Ramage, A.~Segal, and K.~Seth, ``Practical secure aggregation for
  privacy-preserving machine learning,'' in \emph{proceedings of the 2017 ACM
  SIGSAC Conference on Computer and Communications Security}, 2017, pp.
  1175--1191.

\bibitem{ergun2021sparsified}
I.~Ergun, H.~U. Sami, and B.~Guler, ``Sparsified secure aggregation for
  privacy-preserving federated learning,'' \emph{arXiv preprint
  arXiv:2112.12872}, 2021.

\bibitem{lu2023top}
S.~Lu, R.~Li, W.~Liu, C.~Guan, and X.~Yang, ``Top-k sparsification with secure
  aggregation for privacy-preserving federated learning,'' \emph{Computers \&
  Security}, vol. 124, p. 102993, 2023.

\bibitem{acs2011dream}
G.~{\'A}cs and C.~Castelluccia, ``I have a dream!(differentially private smart
  metering),'' in \emph{International Workshop on Information Hiding}.\hskip
  1em plus 0.5em minus 0.4em\relax Springer, 2011, pp. 118--132.

\bibitem{jelasity2007gossip}
M.~Jelasity, S.~Voulgaris, R.~Guerraoui, A.-M. Kermarrec, and M.~Van~Steen,
  ``Gossip-based peer sampling,'' \emph{ACM Transactions on Computer Systems
  (TOCS)}, vol.~25, no.~3, pp. 8--es, 2007.

\bibitem{guerraoui2024peerswap}
R.~Guerraoui, A.-M. Kermarrec, A.~Kucherenko, R.~Pinot, and M.~de~Vos,
  ``Peerswap: A peer-sampler with randomness guarantees,'' in \emph{Proceedings
  of the 43rd International Symposium on Reliable Distributed Systems (SRDS
  2024)}, 2024.

\bibitem{chandra1996unreliable}
T.~D. Chandra and S.~Toueg, ``Unreliable failure detectors for reliable
  distributed systems,'' \emph{Journal of the ACM (JACM)}, vol.~43, no.~2, pp.
  225--267, 1996.

\bibitem{decentralizepy}
A.~Dhasade, A.-M. Kermarrec, R.~Pires, R.~Sharma, and M.~Vujasinovic,
  ``Decentralized learning made easy with decentralizepy,'' in
  \emph{Proceedings of the 3rd Workshop on Machine Learning and Systems}, 2023,
  pp. 34--41.

\bibitem{bach2022promptsource}
S.~H. Bach, V.~Sanh, Z.-X. Yong, A.~Webson, C.~Raffel, N.~V. Nayak, A.~Sharma,
  T.~Kim, M.~S. Bari, T.~Fevry \emph{et~al.}, ``Promptsource: An integrated
  development environment and repository for natural language prompts,''
  \emph{arXiv preprint arXiv:2202.01279}, 2022.

\bibitem{hsiehskewscout2020}
K.~Hsieh, A.~Phanishayee, O.~Mutlu, and P.~B. Gibbons, ``The non-{IID} data
  quagmire of decentralized machine learning,'' in \emph{ICML}, 2020.

\bibitem{gpt2paper}
A.~Radford, J.~Wu, R.~Child, D.~Luan, D.~Amodei, I.~Sutskever \emph{et~al.},
  ``Language models are unsupervised multitask learners,'' \emph{OpenAI blog},
  vol.~1, no.~8, p.~9, 2019.

\bibitem{korenmatrixfactorization2009}
Y.~Koren, R.~Bell, and C.~Volinsky, ``Matrix factorization techniques for
  recommender systems,'' \emph{Computer}, vol.~42, no.~8, pp. 30--37, aug 2009.

\bibitem{2020t5}
C.~Raffel, N.~Shazeer, A.~Roberts, K.~Lee, S.~Narang, M.~Matena, Y.~Zhou,
  W.~Li, and P.~J. Liu, ``Exploring the limits of transfer learning with a
  unified text-to-text transformer,'' \emph{Journal of Machine Learning
  Research}, vol.~21, no. 140, pp. 1--67, 2020.

\bibitem{lin2004rouge}
C.-Y. Lin, ``Rouge: A package for automatic evaluation of summaries,'' in
  \emph{Text summarization branches out}, 2004, pp. 74--81.

\bibitem{10.1145/3512467}
W.~Yu, C.~Zhu, Z.~Li, Z.~Hu, Q.~Wang, H.~Ji, and M.~Jiang, ``A survey of
  knowledge-enhanced text generation,'' \emph{ACM Comput. Surv.}, vol.~54, no.
  11s, nov 2022.

\bibitem{XIAO200465}
L.~Xiao and S.~Boyd, ``Fast linear iterations for distributed averaging,''
  \emph{Systems \& Control Letters}, vol.~53, no.~1, pp. 65--78, 2004.

\bibitem{dhasade:2023:jwins}
A.~Dhasade, A.-M. Kermarrec, R.~{Pires}, R.~Sharma, M.~Vujasinovic, and
  J.~Wigger, ``Get more for less in decentralized learning systems,'' in
  \emph{2023 IEEE 43rd International Conference on Distributed Computing
  Systems (ICDCS '23)}, 2023, pp. 463--474.

\bibitem{yeom2018privacyriskmachinelearning}
S.~Yeom, I.~Giacomelli, M.~Fredrikson, and S.~Jha, ``Privacy risk in machine
  learning: Analyzing the connection to overfitting,'' 2018.

\bibitem{zhu2019deep}
L.~Zhu, Z.~Liu, and S.~Han, ``Deep leakage from gradients,'' \emph{Advances in
  neural information processing systems (NeurIPS '19)}, vol.~32, 2019.

\bibitem{zhao2021feasibility}
B.~Z.~H. Zhao, A.~Agrawal, C.~Coburn, H.~J. Asghar, R.~Bhaskar, M.~A. Kaafar,
  D.~Webb, and P.~Dickinson, ``On the (in) feasibility of attribute inference
  attacks on machine learning models,'' in \emph{2021 IEEE European Symposium
  on Security and Privacy (EuroS\&P)}.\hskip 1em plus 0.5em minus 0.4em\relax
  IEEE, 2021, pp. 232--251.

\bibitem{pasquini2023security}
D.~Pasquini, M.~Raynal, and C.~Troncoso, ``On the (in) security of peer-to-peer
  decentralized machine learning,'' in \emph{2023 IEEE Symposium on Security
  and Privacy (SP)}, 2023, pp. 418--436.

\bibitem{shokri2015privacy}
R.~Shokri and V.~Shmatikov, ``Privacy-preserving deep learning,'' in
  \emph{Proceedings of the 22nd ACM SIGSAC conference on computer and
  communications security}, 2015, pp. 1310--1321.

\bibitem{yue2023gradient}
K.~Yue, R.~Jin, C.-W. Wong, D.~Baron, and H.~Dai, ``Gradient obfuscation gives
  a false sense of security in federated learning,'' in \emph{32nd USENIX
  Security Symposium (USENIX Security 23)}, 2023, pp. 6381--6398.

\bibitem{shamir1979share}
A.~Shamir, ``How to share a secret,'' \emph{Communications of the ACM},
  vol.~22, no.~11, pp. 612--613, 1979.

\bibitem{cryptoeprint:2020/704}
J.~Bell, K.~A. Bonawitz, A.~Gascón, T.~Lepoint, and M.~Raykova, ``Secure
  single-server aggregation with (poly)logarithmic overhead,'' Cryptology
  {ePrint} Archive, Paper 2020/704, 2020.

\bibitem{cryptoeprint:2023/486}
Y.~Ma, J.~Woods, S.~Angel, A.~Polychroniadou, and T.~Rabin, ``Flamingo:
  Multi-round single-server secure aggregation with applications to private
  federated learning,'' Cryptology {ePrint} Archive, Paper 2023/486, 2023.

\bibitem{GUPTA20179515}
N.~Gupta, J.~Katz, and N.~Chopra, ``Privacy in distributed average consensus,''
  \emph{IFAC-PapersOnLine}, vol.~50, no.~1, pp. 9515--9520, 2017, 20th IFAC
  World Congress.

\bibitem{collet:lz4:2022}
Y.~Collet, ``{LZ4},'' 2022.

\bibitem{seide2014-bit}
F.~Seide, H.~Fu, J.~Droppo, G.~Li, and D.~Yu, ``1-bit stochastic gradient
  descent and application to data-parallel distributed training of speech
  {DNNs},'' in \emph{Interspeech 2014}, September 2014.

\end{thebibliography}

\appendices

\section{Scaling with Network Size}

In this section, we examine how the communication cost of \sys scales with an increasing number of nodes in the network. 
Measurements were conducted for networks comprising 48, 96, 192, and 288 nodes. 
Each experiment was performed within a 5-regular network using the \cifar dataset. 
The dataset was divided into 288 equally-sized segments, corresponding to the maximum node count, with each node receiving one segment. 
In cases where the number of segments surpassed the number of nodes, the excess segments remained unassigned and were not utilized in the training process.

\begin{figure}[ht]
    \centering
    \includegraphics{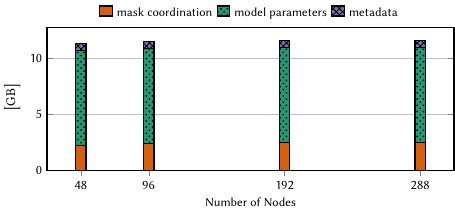}
    \caption{Amount of data transferred per node during training for different number of nodes}
    \label{fig:network_size_data_dependance}
\end{figure}

\begin{table}[ht]
    \centering
    \caption{Amount of data transferred per node during training using \sys for a fixed setup in networks of different sizes}
    \label{tab:network_size_data_dependance}
\pgfplotstableset{
    create on use/param_data/.style={
    create col/expr={\thisrow{total_bytes_mean} + \thisrow{total_meta_mean}}}
}

\pgfplotstabletypeset[
    col sep=comma,
    columns={number_of_nodes, param_data, total_protocol_overhead_mean}, %
    columns/number_of_nodes/.style={column name=\textbf{Number of nodes}, column type={{C{20mm}}}},
    columns/param_data/.style={
        column name=\begin{tabular}{@{}c@{}}\textbf{Parameter data}\\\textbf{[\si{GB}]}\end{tabular}, column type={|c},
        precision=2},
    columns/total_protocol_overhead_mean/.style={
        column name=\begin{tabular}{@{}c@{}}\textbf{CESAR overhead}\\\textbf{[\si{GB}]}\end{tabular}, column type={|c},
        precision=2},
    every head row/.style={before row=\toprule, after row=\midrule},
    every last row/.style={after row=\bottomrule},
]{data/CIFAR/scalability/cummulative_avg_data.csv} \end{table}

\Cref{fig:network_size_data_dependance} illustrates that the communication overhead of \sys remains practically independent of the number of nodes in the network. 
However, a more detailed examination in \Cref{tab:network_size_data_dependance} reveals a marginally higher \sys overhead for larger networks. 
This phenomenon can be attributed to the experimental setup. 
As discussed in \Cref{sec:overhead_analysis}, the number of messages sent during the prestep phase of \sys is proportional to the number of second degree neighbors. 
With fewer nodes in the network, while maintaining a fixed topology, it is more likely for two neighbors to share a common neighbor. 
This overlap effectively reduces the average second degree neighborhood size, thereby lowering the overhead in the prestep.

\section{Scaling with Node Degree}
\label[appendix]{sec:eval_scalability}

In this section, we examine how the communication overhead of \sys scales in practice. 
As observed in \Cref{sec:eval_different_datasets} and \Cref{sec:eval_different_settings}, the overhead with random subsampling remains constant regardless of the network topology. 
In contrast, it increases when \topk is applied.

\Cref{sec:overhead_analysis} derives that the communication overhead of the mask coordination in \sys is $(\alpha d \delta_{\text{max}}^2)$, as a node sends their indices to each second-degree neighbor, and their number typically scales with the square of the degree in the network. 
To verify this, we ran \sys with \topk on a 96-node $\delta$-regular network with $\delta$ values of 3, 6, 9, and 12. 
In each run, 40\% of parameters were selected for sparsification, and we recorded only the overhead caused by the mask coordination phase.

The results for the given degrees are shown in \Cref{fig:degree_scalability_prestep_overhead}. 
They indicate that the overhead scales linearly, contrary to the quadratic scaling we expected. 
However, \Cref{tab:degree_scalability_prestep_overhead} also reveals that the overhead of the mask coordination actually scales with the size of the second-degree neighborhood. 
Because the network size is limited, it is common for two nodes to share a neighbor, hence reducing the size of the second-degree neighborhood compared to a theoretical model where the network size is infinite and nodes sharing a neighbor are uncommon. 
This also implies that the total overhead of the mask coordination is limited by the size of the network to $(\alpha d |\mathcal{N}|)$. 
This overlap of neighbors highlights the importance of the ability of \sys to reuse the same mask that two nodes agreed upon over many aggregation with multiple neighbors these two nodes may have in common.

\begin{table}[ht]
   \centering
   \caption{Amount of data transferred per node in mask coordination of \sys during training for different graph degrees and their relation to the size of second-degree neighborhood.}
   \label{tab:degree_scalability_prestep_overhead}
\pgfplotstableset{
   create on use/overhead_per_2_neighbours/.style={
   create col/expr={\thisrow{total_protocol_overhead_mean} / \thisrow{second_degree_neighbors}}}
}

\pgfplotstabletypeset[
   col sep=comma,
   columns={degree,total_protocol_overhead_mean,second_degree_neighbors,overhead_per_2_neighbours}, %
   columns/degree/.style={column name=\textbf{Degree}, column type={c}},
   columns/second_degree_neighbors/.style={
   		column name={\textbf{Avg. $2^{nd}$ deg. neighborhood size}}, column type={|C{20mm}},
   		precision=2},
   columns/total_protocol_overhead_mean/.style={
       column name={\textbf{Overhead [\si{GB}]}}, column type={|C{14mm}},
       fixed, fixed zerofill, precision=2},
   columns/overhead_per_2_neighbours/.style={
       column name={\textbf{Overhead per $2^{nd}$ deg. neighbor [\si{GB}]}}, column type={|C{23mm}},
       fixed, fixed zerofill, precision=4},
   every head row/.style={before row=\toprule, after row=\midrule},
   every last row/.style={after row=\bottomrule},
]{data/CIFAR/degree_scalability/cummulative_avg_data.csv}
 \end{table}

\begin{figure}[t]
    \centering
    \includegraphics{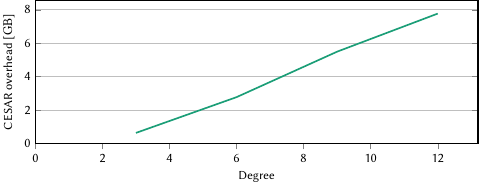}
    \caption{Amount of data transferred per node in mask coordination of \sys during training for different graph degrees.}
    \label{fig:degree_scalability_prestep_overhead}
\end{figure}

\ifthenelse{\boolean{isextendedversion}}{%
\section{Table Of Notation}
\label[appendix]{app:sec:table_of_notation}

In this section, we present a table of symbols used throughout the paper, along with their corresponding descriptions. The table is shown in \Cref{tab:table_of_notation}.

\begin{table*}[h!]
    \centering
    \begin{tabular}{c|p{13cm}}
        \toprule
        \textbf{Notation} & \textbf{Description}\\
        \midrule
        \multicolumn{2}{c}{\textbf{System}}\\
        \midrule
        $\mathcal{N}$ & Set of participating nodes \\
        $N$ & A node\\
        $n$ & Number of participating nodes\\
        $G(\mathcal{N}, E)$ & Aggregation graph \\
        $\delta$ & Degree of a node\\
        $\operatorname{View}(N)$ & Immediate neighbors of $N$\\
        $\operatorname{View}_2(N)$ & Second-degree neighbors of $N$\\
        $\operatorname{Comm}(N_{\ell}, N_i)$ & Set of nodes that $N_i$ establishes a mask with for masking the model sent to $N_\ell$\\
        \midrule
        \multicolumn{2}{c}{\textbf{Machine learning model}}\\
        \midrule
        $d$ & Number of parameters/model size\\
        $\mathcal{I}$ & Set of all possible indices/parameter positions\\
        $p$ & An index/parameter position\\
        $v$ & Non-masked model\\
        $v^{(p)}$ & Value of parameter of model $v$ at position $p$\\
        $v_{\ell \leftarrowvert i}$ & Masked model node $N_i$ sends to node $N_\ell$\\
        $M$ & A mask\\
        \midrule
        \multicolumn{2}{c}{\textbf{Collusion}}\\
        \midrule
        $s$ & Masking requirement\\
        $\mathcal{N}_T$ & Set of trusted nodes\\
        $\mathcal{N}_A$ & Set of colluding adversarial nodes\\
        $n_t$ & Number of trusted\\
        $k$ & Number of colluding adversarial nodes\\
        \midrule
        \multicolumn{2}{c}{\textbf{Sparsification}}\\
        \midrule
        $\alpha$ & Fraction of total parameters selected in local sparsification\\
        $\beta$ & Fraction of total parameters shared after discarding insufficiently masked parameters\\
        \bottomrule
    \end{tabular}
    \caption{List of symbols used in this paper}
    \label{tab:table_of_notation}
\end{table*}

\section{Postponed Proofs}

\subsection{Proof of \Cref{th:hbc_resilient}}\label[appendix]{app:proof:hbc_resilient}
\begin{proof}
Without the loss of generality, let us fix a node $N \in \mathcal{N}$ and consider an arbitrary parameter $p\in I$.
During \sys, node $N$ only receives masked models from its neighbors. If $N$ did not receive the parameter $p$ from any neighbor, it trivially does not learn anything about $p$.

If $N$ received the parameter $p$ from the neighbor $N_{a_0}$, it means that $N_{a_0}$ must have applied at least one mask to parameter $p$.
Under \sys, if $N_{a_0}$ has applied at least one mask, there must be a non-empty exhaustive set of the neighbors $\{N_{a_1}, \dots, N_{a_l}\}$ that share pairwise masks with $N_{a_0}$ for $p$.
Hence, each node of the set $\mathcal{A} = \{N_{a_0}, N_{a_1}, \dots, N_{a_l}\} $, with $l\ge 1$, must have intended to share $p$.
Furthermore, by the algorithm, every pair $\{i, j\}$ of nodes in $\mathcal{A}$ must have agreed on a random pairwise mask $M^{'(p)}_{ij} = -M^{'(p)}_{ji}$ for $p$.
Given that $v_{N_\ell}^{(p)}$ is a parameter value $p$ at $N_\ell$ before \sys and $w_{N_\ell}^{(p)}$ is the value received by $N$ from $N_\ell \in \mathcal{A}$ for parameter $p$, we know that $$w_{N_\ell}^{(p)} = v_{N_\ell}^{(p)} + \sum_{N_o \in \mathcal{A}\setminus\{N_\ell\}} M^{'(p)}_{N_\ell N_o}.$$

The information available to $N$ can be modelled by the system of linear equations:

\begin{align*}
    v_{N_{a_0}} + M_{N_{a_0}N_{a_1}} + M_{N_{a_0}N_{a_2}} + \dots + M_{N_{a_0}N_{a_l}} &= w_{N_{a_0}}\\
    v_{N_{a_1}} - M_{N_{a_0}N_{a_1}} + M_{N_{a_1}N_{a_2}} + \dots + M_{N_{a_1}N_{a_l}} &= w_{N_{a_1}}\\
    &\vdots \\
    v_{N_{a_l}} - M_{N_{a_0}N_{a_l}} - M_{N_{a_1}N_{a_l}} + \dots - M_{(l-1)l} &= w_{N_{a_l}}\\
\end{align*}

From this, w.l.o.g., we can consider a pair of nodes from $\mathcal{A}$: let them be $N_{a_0}$ and $ N_{a_1}$.
Next, we assume that all values in the system other than $v_{N_{a_0}}, v_{N_{a_1}}$ and $M_{N_{a_0}N_{a_1}}$ are uniquely determined.
Now we write down all equations where any of the unknown values appear:

\begin{align*}
    &v_{{N}_{a_0}} + M_{{N}_{a_0}{N}_{a_1}} + \underbrace{M_{N_{a_0} N_{a_2}} + \dots + M_{N_{a_0} N_{a_l}}}_{\hat{M}_{N_{a_0}}} = w_{N_{a_0}}\\
    \implies  &v_{N_{a_0}} + M_{N_{a_0}N_{a_1}} = \underbrace{w_{N_{a_0}} - \hat{M}_{N_{a_0}}}_{w^{'}_{N_{a_0}}}\\
    \cline{1-2}
    &v_{N_{a_1}} - M_{N_{a_0}N_{a_1}} + \underbrace{M_{N_{a_1}N_{a_2}} + \dots + M_{N_{a_1}N_{a_l}}}_{{\hat{M}}_{N_{a_2}}} = w_{N_{a_1}}\\
    \implies &  v_{N_{a_1}} - M_{N_{a_0}N_{a_1}} = \underbrace{w_{N_{a_1}} - \hat{M}_{N_{a_1}}}_{w^{'}_{N_{a_0}}}\\
    \cline{1-2}
\end{align*}

Thus, we obtain: 

\begin{align}
    &v_{N_{a_0}} + M_{N_{a_0}N_{a_1}} = w^{'}_{N_{a_0}}\nonumber\\
    &v_{N_{a_1}} - M_{N_{a_0}N_{a_1}} = w^{'}_{N_{a_1}}\nonumber
\end{align}

Here, we see a system of two equations with three unknowns from where no unknown can be uniquely determined.

Analogously, applying the same approach for any pair of nodes from $\mathcal{A}$ we see that none can be uniquely determined.
Therefore, honest-but-curious node $N$ cannot infer any $v_{N_{a_i}}$ for $\forall N_{a_i} \in \mathcal{A}$, \ie $N$ cannot infer an exact value of parameter at position $p$ for any neighbor that shares $p$ with $N$.

\end{proof}

\subsection{Proof of \Cref{lem:numberofmasks}}\label[appendix]{app:proof:numberofmasks}
\begin{proof}
    The set of indices selected by node $N \in \mathcal{N}$ in sparsification is denoted as $I_N \subseteq \mathcal{I}$.
    We look at the index $p$ and node $N_j$ that receives the masked models.
    Furthermore we define a set of neighbors of $N_j$ who have selected $p$ in sparsification as $Sel(N_j, p) = \{N \in \text{View}(N_j): p \in I_N\}$. 

    For every neighbor $N_i \in \text{View}(N_j)$ there are two cases to consider for index $p$:

    \textbf{Case 1.} $p \notin I_{N_i}$: If $N_i$ does not select $p$ for sharing, by the algorithm there trivially will not be any masks agreed between $N_i$ and another neighbor of $N_j$, making the number of mask in this case $0$.

    \textbf{Case 2.} $p \in I_{N_i}$:  If $N_i$ selects $p$ for sharing the total number of masks $N_i$ will agree upon with other neighbors of $N_j$ on $p$ is $\text{MaskCount}(N_i, N_j, p) = |\{N \in \text{View}(N_j)\setminus\{N_i\}: p \in I_{N_i \cap N}\}|$.
    Considering the precondition of the case, expression $p \in I_{N_i \cap N}$ can be transformed into $p \in I_N$ making $\text{MaskCount}(N_i, N_j, p) = |\{N \in \text{View}(N_j)\setminus\{N_i\}: p \in I_N\}|$.
    Because $p \in I_{N_i}$ then $N_i \in Sel(N_j, p)$, so the previous expression can also be written as:
    
    $\text{MaskCount}(N_i, N_j, p) = | Sel(N_j, p)\setminus\{N_i\}| = |Sel(N_j, p)|-1$.
    Repeating the same for all neighbors of $N_j$ we obtain the exact same value, hence proving the statement of the lemma.
\end{proof}

\subsection{Proof of \Cref{lem:graphreduction}}\label[appendix]{app:proof:graphreduction}
\begin{proof}
$i.$ If there are $s< k$ adversarial nodes in the graph, adding an adversarial node with degree $0$ will not change the number of masks that the trusted nodes applied to their model parameters.

$ii.$ If there is a trusted node with degree $0$, it does not contribute to the working of \sys or the number of masks that the other trusted nodes decide to apply. Therefore, we may remove them from the graph without compromising the privacy of any of the nodes. 

$iii.$ If there is any $N_0\in\mathcal{N}_A$ without having an adversary as its neighbor, for every $N\in\mathcal{N}_A$, adding $\{N_0, N\}$ to $E$ increments the number of masks added to the parameters of any trusted node connected to $N_0$ or $N$ by on (which will possibly be shared between $N$ and $N'$). However, more importantly, this does not reduce any masks already applied to the parameters of the trusted nodes, implying their privacy remains at least as it was when  $\{N_0, N\}$ was not in $E$. 

$iv.$ Let $\{N_i, N_j\} \in E$ for some $N_i, N_j \in \mathcal{N}_T$. Deleting this edge between $N_i$ and $N_j$ will not affect the masks that any trusted node applies to the parameters that they share with the adversarial nodes. We note, however, that deleting the edge between $N_i$ and $N_j$ will reduce the number of masks applied to the parameters that any other trusted node $N_{\ell}$ shares with $N_i$ or $N_j$. But, as $N_i$ or $N_j$ are not involved in any collusion (\ie, have no information about the masks used by any other node), as long as the parameters they receive from $N_{\ell}$ have at least one mask applied to them, the privacy of $N_{\ell}$ will be upheld.  
\end{proof}

\subsection{Proof of \Cref{th:collusions}}\label[appendix]{app:proof:collusions}
\begin{proof}
    By \Cref{lem:graphreduction}, it is sufficient to consider the case where we the trusted nodes have edges only with the adversarial nodes. Setting the number of adversarial nodes as $k$, as they collude in nature, we assume that they form a $k$ clique in the network. Letting $n_t$ be the number of trusted nodes present in the network, we proceed to show the result by inducting over $n_t$. 

    When $n_t=1$, denoting $N$ to be the only trusted node, as it is connected to an adversarial node $A$, $|\operatorname{Comm}(A, N)|=k-1$. Therefore, $N$ can have at most $k-1$ masks applied to any of its parameters (which are decided by \sys communicating with the members of $\operatorname{Comm}(A, N)$). As the masking requirement is set to $k$, $N$ will never end up sharing updates from any of its parameters with $A$, implying that its true value will never be revealed.

     Let the result hold for some $n_t=\lambda-1\geq 2$ and a new trusted node in the network. Let the trusted nodes be $N_1,\ldots N_{\lambda}$. Due to \Cref{lem:graphreduction}, we may ignore any edge between the trusted nodes and only consider the case where they have adversarial neighbors. Let $N_{i}$ have an edge with adversarial node $A_{i}$ for every $i\in\{1,\ldots,\lambda\}$. 
     
     \textbf{Case 1.} $A_{\lambda}\neq A_i$ for every $i\in\{1,\ldots,\lambda-1\}$: This essentially implies that $N_{\lambda}$ can have at most $k-1$ masks applied its values at any index and, therefore, will not share its model updates with the adversarial nodes and it will remain private.

     \textbf{Case 2.} $A_{\lambda}= A_i$ for some $i\in\{1,\ldots,\lambda-1\}$:  If $|\operatorname{Comm}(A_{\lambda}, N_{\lambda})|<k-1$, the maximum number of masks that $N_{\lambda}$ can apply to the value of its model update at any parameter will be less than its masking requirement of $k$ and, hence, will not share its model updates with the adversarial nodes and it will remain private. Otherwise, $N_{\lambda}$ and $N_i$ communicate and privately agree to mask their corresponding model updates at the indices in $I_{\lambda \cap i}$ with $M_{\lambda j}$ and  $M_{j\lambda}(=- M_{\lambda j})$, respectively. Thus, $\left\{v^{(p)}_{\lambda}\colon p\in I_{\lambda \cap i}\right\}$ will have at least one secure mask $M_{\lambda}$ applied to it for every $p\in I_{\lambda \cap i}$ which will not be available to the colluding adversarial nodes. Thus, the true values $\{v^{(p)}_{\lambda}\}$  will remain hidden from $A_{\lambda}$ (and, correspondingly, from every other adversarial node) due to the privately chosen mask between $N_{\lambda}$ and $N_{i}$.
\end{proof}

\subsection{Proof of \Cref{th:expected_parameters_shared_collusion}}\label[appendix]{app:proof:expected_parameters_shared_collusion}
\begin{proof}

Random subsampling ensures that the sparsification selects indices with uniform probability $\alpha$ and we assume that the sparsification carried out by each node is independent.

Without the loss of generality, let us consider the nodes $N,N'\in\mathcal{N}$ where $N$ is the sender and $N'$ is the receiver (i.e., $\{N,N'\}\in E$). With the masking requirement of $s$, we note that $N$ shares a parameter with $N'$ iff it has been sampled by at least $s$ other neighbors of $N'$. Therefore, the probability $\hat\pi^{(p)}_{N\rightarrow N'}$ of $N$ sharing a parameter $p\in I$ with $N'$ in one round of \sys is given by:

\begin{align*}
    \hat\pi^{(p)}_{N\rightarrow N'}&= \pi_i^{(p)} \left(1 - \prod_{j\in \operatorname{View}_2(N)} \left(1 - \pi_j^{(p)}\right)\right)\\
    &= \alpha\sum_{i=s}^{\delta-1} \binom{\delta-1}{i} \alpha^i (1-\alpha)^{\delta-1-i}\nonumber\\
    &=\sum_{i=s}^{\delta-1} \binom{\delta-1}{i} \alpha^{i+1} (1-\alpha)^{\delta-1-i}
\end{align*}

Hence, let $X_{N\rightarrow N'}$ be the random variable denoting the number of parameters that $N$ shares with $N'$ in one round with a masking requirement of $s$, and for every $p\in I$, let $X_{N\rightarrow N'}(p)=\mathbb{I}_{\{N\text{ shares } p \text{ with } N'\}}$ be the random variable denoting whether or not $N$ shares the parameter $p$ with $N'$. Therefore, noting that $X_{N\rightarrow N'}=\sum_{p\in I}X_{N\rightarrow N'}(p)$, we have:
\begin{align}
    \mathbb{E}\left[X_{N\rightarrow N'}\right]
    &=\mathbb{E}\left[\sum_{p\in I}X_{N\rightarrow N'}(p)\right]
    =\sum_{p\in I}\mathbb{E}\left[X_{N\rightarrow N'}(p)\right]\nonumber\\
    &=\sum_{p\in I}\mathbb{P}\left[\{N\text{ shares } p \text{ with } N'\}\right]\nonumber\\
    &=d \hat\pi^{(p)}_{N\rightarrow N'}.\nonumber
\end{align}

Thus, the expected fraction of the model parameters that $N$ shares with $N'$ is given by:
\begin{align}
    \beta\left(\alpha,\delta,s\right)&=\mathbb{E}\left[\frac{X_{N\rightarrow N'}}{d}\right]
    =\frac{\mathbb{E}\left[X_{N\rightarrow N'}\right]}{d}
    =\hat\pi^{(p)}_{N\rightarrow N'}\nonumber\\
    &=\sum_{i=s}^{\delta-1} \binom{\delta-1}{i} \alpha^{i+1} (1-\alpha)^{\delta-1-i}.\nonumber
\end{align}

\end{proof}

\subsection{Proof of \Cref{th:expected_parameters_shared}}\label[appendix]{app:proof:expected_parameters_shared}
\begin{proof}
    In the absence of collusion, the masking requirement of all the nodes is $1$. Thus, putting $s=1$ in \Cref{th:expected_parameters_shared_collusion}, we obtain:
    \begin{align}
        \beta\left(\alpha,\delta,1\right)
    &=\alpha\sum_{i=1}^{\delta-1} \binom{\delta-1}{i} \alpha^{i} (1-\alpha)^{\delta-1-i} \nonumber\\
    &=\alpha \left(1-(1-\alpha)^{\delta - 1}\right)\nonumber
    \end{align}
\end{proof} %
}{}

\end{document}